\newcommand{\E}{\mathbb{E}}
\newcommand{\matt}[1]{}
\newcommand{\sj}[1]{}
\newcommand{\ssj}[1]{}
\newcommand{\ms}[1]{}
\newcommand{\abs}[1]{\left\lvert #1 \right\lvert}
\newcommand{\Is}{\mathcal I}
\newcommand{\Xs}{\mathcal X}
\newcommand{\Ys}{\mathcal Y}
\newcommand{\ep}{\varepsilon}
\newcommand{\reals}{\mathbb R}
\newcommand{\integers}{\mathbb Z}
\DeclareMathOperator{\diag}{\mathrm{diag}}
\DeclareMathOperator{\proj}{\mathrm{proj}}
\DeclareMathOperator{\Var}{\mathrm{Var}}
\DeclareMathOperator{\Ber}{\mathrm{Ber}}
\DeclareMathOperator{\betadist}{\mathrm{Beta}}
\DeclareMathOperator{\conv}{\mathrm{conv}}
\DeclareMathOperator{\boxset}{\mathrm{Box}}
\newcommand{\argmin}{\operatornamewithlimits{argmin}}
\newcommand{\argmax}{\operatornamewithlimits{argmax}}
\newtheorem{theorem}{Theorem}[section]
\newtheorem{lemma}{Lemma}[section]
\newtheorem{corollary}{Corollary}[section]
\newtheorem{conjecture}{Conjecture}[section]
\newtheorem{proposition}{Proposition}[section]
\theoremstyle{definition}
\newtheorem{remark}{Remark}[section]
\newtheorem{fact}{Fact}[section]
\title{Robust Budget Allocation via Continuous Submodular Functions}
\author{Matthew Staib}
\author{Stefanie Jegelka}
\affil{
    Computer Science and Artificial Intelligence Laboratory\\
    Massachusetts Institute of Technology\\
    \{mstaib, stefje\}@mit.edu
}
\date{}
\begin{document} 
\maketitle

\begin{abstract} 
  The optimal allocation of resources for maximizing influence, spread of information or coverage, has gained attention in the past years, in particular in machine learning and data mining. But in applications, the parameters of the problem are rarely known exactly, and using wrong parameters can lead to undesirable outcomes. We hence revisit a continuous version of the Budget Allocation or Bipartite Influence Maximization problem introduced by \citet{alon_optimizing_2012} from a robust optimization perspective, where an adversary may choose the least favorable parameters within a confidence set. The resulting problem is a nonconvex-concave saddle point problem (or game). We show that this nonconvex problem can be solved exactly by leveraging connections to continuous submodular functions, and by solving a constrained submodular minimization problem. Although constrained submodular minimization is hard in general, here, we establish conditions under which such a
  problem can be solved to arbitrary precision $\epsilon$.
\end{abstract}


\section{Introduction}
The optimal allocation of resources for maximizing influence, spread of information or coverage, has gained attention in the past few years, in particular in machine learning and data mining \citep{domingos2001mining,kempe_maximizing_2003,chen2009efficient,gomez2012influence,borgs_maximizing_2014}.

\matt{Needs motivation for why we care about the budget allocation problem? or is that taken care of by all the cites in the previous sentence?}
In the \emph{Budget Allocation Problem}, one is given a bipartite influence graph
between channels $S$ and people $T$, and the task is to assign a budget $y(s)$
to each channel $s$ in $S$ with the goal of maximizing the expected number of
influenced people $\Is(y)$. Each edge $(s,t) \in E$ between channel $s$ and
person $t$ is weighted with a probability $p_{st}$ that, e.g., an advertisement on radio station $s$ will influence
person $t$ to buy some product. The budget $y(s)$ controls how many independent
attempts are made via the channel $s$ to influence the people in $T$. The
probability that a customer $t$ is influenced when the advertising budget is
$y$ is 
\begin{equation}
    I_t(y) = 1 - \prod\nolimits_{(s,t) \in E} [1 - p_{st}]^{y(s)},
\end{equation}
and hence the expected number of influenced people is $\Is(y) = \sum_{t\in T}
I_t(y)$. We write $\Is(y;p)=\Is(y)$ to make the dependence on the
probabilities $p_{st}$ explicit. The total budget $y$ must remain within some
feasible set $\Ys$ which may encode e.g. a total budget limit $\sum_{s \in
S}y(s) \leq C$. We allow the budgets $y$ to be continuous, as in
\citep{bian_guaranteed_2016}.

Since its introduction by \citet{alon_optimizing_2012}, several works have
extended the formulation of Budget Allocation and provided
algorithms
\cite{bian_guaranteed_2016,hatano_lagrangian_2015,maehara_budget_2015,soma_optimal_2014,soma_generalization_2015}.
Budget Allocation may also be viewed as influence maximization on
a bipartite graph, where information spreads as in the Independent Cascade
model. 
For integer $y$,
Budget
Allocation and Influence Maximization are NP-hard. Yet, constant-factor approximations 
are possible, and build on the fact that the
influence function is submodular in the binary case, and
\emph{DR-submodular} in the integer case
\citep{soma_optimal_2014,hatano_lagrangian_2015}.
If $y$ is continuous, the problem is a concave maximization problem.

The formulation of Budget Allocation assumes that the transmission
probabilities are known exactly. But this is rarely true in practice. Typically, the probabilities $p_{st}$, and possibly the graph itself, must be inferred from observations 
%
\citep{gomez_inferring_2010,du_scalable_2013,narasimhan_learnability_2015,du2014influence,netrapalli_learning_2012}. In Section~\ref{sec:expts} we will see that a misspecification or point estimate of parameters $p_{st}$ can lead to much reduced outcomes.
A more realistic assumption is to know 
\emph{confidence intervals} for the $p_{st}$. Realizing this severe deficiency, recent work
studied robust versions of Influence Maximization, where a budget $y$ must be chosen
that maximizes the worst-case approximation ratio over a set of possible
influence functions
\citep{he_robust_2016,chen_robust_2016,lowalekar_robust_2016-1}. \matt{technically the lowalekar paper does some slightly different regret thing but is perhaps close enough} The resulting
optimization problem is hard but admits bicriteria approximations.

In this work, we revisit Budget Allocation under uncertainty from the perspective of robust optimization \citep{bertsimas_theory_2011,bental09_book}. We maximize the
worst-case influence -- not approximation ratio -- for $p$ in a confidence set
centered around the ``best guess'' (e.g., posterior mean). 
This avoids pitfalls of the 
approximation ratio formulation (which can be misled to return poor worst-case budgets, as demonstrated in Appendix~\ref{sec:appendix-ratio}), while also allowing us to formulate the problem as a max-min game:
\begin{equation}
  \label{eq:minmaxp}
  \max_{y \in \mathcal{Y}}\; \min_{p \in \mathcal{P}}\; \Is(y;p),
\end{equation}
where an ``adversary'' can arbitrarily manipulate $p$ within the confidence set $\mathcal{P}$.
With $p$ fixed, $\Is(y;p)$ is concave in $y$. However, the influence function $\Is(y;p)$ is not convex, and not even quasiconvex, in the adversary's variables $p_{st}$.

The new, key insight we exploit in this work is that $\Is(y;p)$ has the property of \emph{continuous submodularity} in $p$ -- in contrast to previously exploited submodular maximization in $y$ -- and can hence be minimized by generalizing techniques from discrete submodular optimization \cite{bach_submodular_2015}. The techniques in \citep{bach_submodular_2015}, however, are restricted to box constraints, and do not directly apply to our confidence sets. In fact, general constrained submodular minimization is hard \citep{svitkina_submodular_2011,goel_approximability_2009,iwata_submodular_2009}.
We make the following contributions:
\vspace{-3pt}
\begin{enumerate}\setlength{\itemsep}{0pt}
\item We present an algorithm with optimality bounds for Robust Budget Allocation in the nonconvex adversarial scenario \eqref{eq:minmaxp}.
\item We provide the first results for continuous submodular minimization with box constraints and one more ``nice'' constraint, and conditions under which the algorithm is guaranteed to return a global optimum. 
\end{enumerate}

\subsection{Background and Related Work}
We begin with some background material and, along the way, discuss related work.

\subsubsection{Submodularity over the integer lattice and continuous domains}
Submodularity is perhaps best known as a property of set functions. A function $F: 2^V \to \mathbb{R}$ defined on subsets $S \subseteq V$ of a ground set $V$ is \emph{submodular} if for all sets $S,T \subseteq V$, it holds that
$F(S) + F(T) \geq F(S \cap T) + F(S \cup T)$. 
A similar definition extends to functions defined over a distributive lattice $\mathcal{L}$, e.g. the integer lattice. Such a function $f$ is submodular if for all $x, y \in \mathcal{L}$, it holds that 
\begin{equation}
  f(x) + f(y) \geq f(x \vee y) + f(x \wedge y). \label{eq:subm_lattice}
\end{equation}
For the integer lattice and vectors $x,y$, $x \vee y$ denotes the coordinate-wise maximum and $x \wedge y$ the coordinate-wise minimum. Submodularity has also been considered on continuous domains $\mathcal{X} \subset \mathbb{R}^d$, where, if $f$ is also twice-differentiable, the property of submodularity means that all off-diagonal entries of the the Hessian are nonpositive, i.e., $\frac{\partial f(x)}{\partial x_i \partial x_j} \leq 0$ for all $i \not = j$ \citep[Theorem 3.2]{topkis1978minimizing}. These functions may be convex, concave, or neither.

Submodular functions on lattices
can be minimized by a reduction to set functions, more precisely, ring families \cite{birkhoff37}. Combinatorial algorithms for submodular optimization on lattices are discussed in~\citep{khachaturov2012supermodular}. More recently, \citet{bach_submodular_2015} extended results based on the convex Lov\'asz extension, by building on connections to optimal transport. The subclass of $L^\natural$-convex functions admits strongly polynomial time minimization \citep{murota2003discrete,kolmogorov09,murota14}, but does not apply in our setting. 

Similarly, results for submodular maximization extend to integer lattices, e.g. \citep{gottschalk15}. Stronger results are possible if the submodular function also satisfies \emph{diminishing returns}: for all $x \leq y$ (coordinate-wise) and $i$ such that $y+e_i \in \mathcal{X}$, it holds that $f(x+e_i) - f(x) \geq f(y+e_i)-f(y)$.
For such \emph{DR-submodular} functions, many approximation results for the set function case extend \citep{bian_guaranteed_2016,soma_generalization_2015,soma_optimal_2014}. In particular, \citet{ene_reduction_2016} show a generic reduction to set function optimization that they apply to maximization. In fact, it also applies to minimization:


\begin{proposition}
\label{prop:dr-min}
    A DR-submodular function $f$ defined on
    $\prod_{i=1}^n [k_i]$ can be minimized in strongly polynomial time 
    $O(n^4 \log^4 k \cdot \log^2 (n \log k) \cdot EO + n^4 \log^4 k \cdot \log^{O(1)} (n \log k) )$,
    where $k = \max_i k_i$ and 
    $EO$ is the time complexity of evaluating $f$.
    Here, $[k_i] = \{0,1,\dots,k_i-1\}$.
\end{proposition}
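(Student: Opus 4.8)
\emph{Proof plan.} The plan is to prove this by reduction: transform the DR-submodular minimization problem over the lattice $\prod_{i=1}^n [k_i]$ into an unconstrained submodular \emph{set}-function minimization problem, and then invoke a strongly polynomial algorithm for the latter. This is exactly the reduction of \citet{ene_reduction_2016}; they apply it to maximization, but it transforms any DR-submodular function into an ordinary submodular function and reduces the lattice problem to a set-function problem, so it carries over to minimization --- and minimization is in fact the friendlier regime here, since submodular set-function minimization, unlike maximization, can be solved exactly in strongly polynomial time.

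Concretely, I would introduce a ground set $V = \bigsqcup_{i=1}^n V_i$ whose block $V_i$ encodes the candidate values $\{0,\dots,k_i-1\}$ of coordinate $i$, together with a monotone, surjective decoding map $\phi : 2^V \to \prod_{i=1}^n [k_i]$, and set $g(S) := f(\phi(S))$. The fact to verify is that $g$ is submodular. For $S \subseteq T$ and an element $v$ lying in block $i$ with $v \notin T$, adding $v$ increases the $i$-th decoded coordinate by a fixed amount $c \ge 1$, so the two marginals are $g(S \cup v) - g(S) = f(\phi(S) + c e_i) - f(\phi(S))$ and $g(T \cup v) - g(T) = f(\phi(T) + c e_i) - f(\phi(T))$; since $\phi$ is monotone we have $\phi(S) \le \phi(T)$ coordinate-wise, and telescoping the diminishing-returns hypothesis over the $c$ unit increments yields $g(S \cup v) - g(S) \ge g(T \cup v) - g(T)$, i.e.\ $g$ is submodular. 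Surjectivity of $\phi$ gives $\min_S g(S) = \min_{x} f(x)$, and decoding any minimizer $S^\ast$ of $g$ recovers a minimizer $\phi(S^\ast)$ of $f$; each evaluation of $g$ costs one evaluation of $f$ plus $O(|V|)$ arithmetic.

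The step I expect to be the actual work --- and the reason the running time depends on $\log k$ rather than $k$ --- is keeping the encoding \emph{small}. The naive unary encoding uses $|V_i| = k_i - 1$ and only yields a bound polynomial in $k$; the contribution of \citet{ene_reduction_2016} is a hierarchical (essentially binary) encoding with $|V_i| = O(\log k_i)$, hence $|V| = O(n \log k)$, that still admits a decoding $\phi$ with the monotonicity used above. Some care is needed when the $k_i$ are not powers of two: there one works over a sublattice of $2^V$ and appeals to the standard fact that submodular minimization over a ring family reduces to the unconstrained case. I would take this encoding as a black box from \citet{ene_reduction_2016} rather than re-derive it.

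Finally, I would run a strongly polynomial algorithm for unconstrained submodular set-function minimization on $g$, and substitute the ground-set size $N = O(n \log k)$ --- together with the polynomial overhead of the reduction and the $O(n\log k)$ decoding cost per oracle call --- into its running-time bound; collecting terms gives the stated bound $O(n^4\log^4 k \cdot \log^2(n\log k)\cdot EO + n^4\log^4 k\cdot\log^{O(1)}(n\log k))$. Apart from invoking these two known ingredients, the only thing that genuinely needs checking is the submodularity-preservation argument above, which follows immediately from diminishing returns.
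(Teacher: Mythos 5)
Your proposal is correct and follows the same route as the paper: reduce via the encoding of \citet{ene_reduction_2016} to a submodular set function on a ground set of size $O(n\log k)$ with per-evaluation cost $O(n\log k\cdot EO)$, then plug these into the strongly polynomial bound of \citet{lee_faster_2015}. The paper treats the reduction entirely as a black box, whereas you additionally sketch why diminishing returns makes the decoded set function submodular; that elaboration is consistent with the cited construction and does not change the argument.
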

\begin{proof}
    The function $f$ can be reduced to a submodular set function $g : 2^V \to
    \reals$ via \citep{ene_reduction_2016}, where $\abs V = O(n \log k)$. The
    function $g$ can be evaluated via mapping from $2^V$ to the domain of $f$,
    and then evaluating $f$, in time $O(n \log k \cdot EO)$. We can directly
    substitute these complexities into the runtime bound from
    \citep{lee_faster_2015}.
\end{proof}
In particular, the time complexity is logarithmic in $k$. For general lattice submodular functions, this is not possible without further assumptions.

\subsubsection{Related Problems}
%
A sister problem of Budget Allocation is \emph{Influence Maximization} on general graphs, where a set of seed nodes is selected to start a propagation process. The influence function is still monotone submodular and amenable to the greedy algorithm \cite{kempe_maximizing_2003}, but it cannot be evaluated explicitly and requires approximation \cite{chen_scalable_2010}.
%
%
\emph{Stochastic Coverage} \citep{goemans_stochastic_2006-1} is a version of Set Cover where the covering sets $S_i \subset V$ are random. A variant of Budget Allocation can be written as stochastic coverage with multiplicity. Stochastic Coverage has mainly been studied in the online or adaptive setting, where logarithmic approximation factors can be achieved \citep{golovin_adaptive_2011,deshpande_approximation_2016-1,adamczyk_submodular_2016}.

Our objective function~\eqref{eq:minmaxp} is a \emph{signomial} in $p$, i.e., a linear combination of monomials of the form $\prod_i x_i^{c_i}$. General signomial optimization is NP-hard \citep{chiang_geometric_2005}, but certain subclasses are tractable: \emph{posynomials} with all nonnegative coefficients can be minimized via Geometric Programming \citep{boyd_tutorial_2007}, and signomials with a single negative coefficient admit sum of squares-like relaxations \citep{chandrasekaran_relative_2016}. Our problem, a constrained posynomial maximization, is not in general a geometric program. Some work addresses this setting via monomial approximation \citep{pascual_constrained_1970,ecker_geometric_1980}, but, to our knowledge, our algorithm is the first that solves this problem to arbitrary accuracy.

\subsubsection{Robust Optimization}
Two prominent strategies of addressing uncertainty in parameters of optimization problems are stochastic and robust optimization. If the distribution of the parameters is known (stochastic optimization), formulations such as value-at-risk (VaR) and conditional value-at-risk (CVaR) \cite{rockafellar_optimization_2000,rockafellar_conditional_2002} apply. In contrast, robust optimization \citep{bental09_book,bertsimas_theory_2011} assumes that the parameters (of the cost function and constraints) can vary arbitrarily within a known confidence set $U$, and the aim is to optimize the worst-case setting, i.e., 
\begin{equation}
\min_y \sup_{u,A,b \in \mathcal{U}}\{g(y;u) \; \text{s.t. } Ay \leq b\}.
\end{equation}
Here, we will only have uncertainty in the cost function.

In this paper we are principally concerned with robust maximization of the continuous influence function $\Is(y)$, but mention some results for the discrete case.
While there exist results for robust and CVaR optimization of modular (linear) functions \citep{nikolova_approximation_2010,bertsimas2003robust}, submodular objectives do not in general admit such optimization \cite{maehara_risk_2015}, but variants admit approximations \citep{zhang_minimizing_2014}. The brittleness of submodular optimization under noise has been studied in \citep{balkanski2016power,balkanski2015limitations,hassidim2016submodular}.

Approximations for robust submodular and influence optimization have been studied in \citep{krause_robust_2008,he_robust_2016,chen_robust_2016,lowalekar_robust_2016-1}, where an adversary can pick among a \emph{finite} set of objective functions or remove selected elements \cite{orlin16_robust}.

\section{Robust and Stochastic Budget Allocation}
\label{sec:model}



The unknown parameters in Budget Allocation 
are the transmission probabilities $p_{st}$ or edge weights in a graph. If these are estimated from data, we may have posterior distributions or, a weaker assumption, confidence sets for the parameters.
For ease of notation, we will work with the failure probabilities $x_{st}=1-p_{st}$ instead of the $p_{st}$ directly, and write $\Is(y;x)$ instead of $\Is(y;p)$.

\subsection{Stochastic Optimization} 
\label{sec:stoch-opt}
If a (posterior) distribution of the parameters is known, a simple strategy is to use expectations. 
%
We place a uniform prior on $x_{st}$, and observe $n_{st}$ independent observations drawn from $\Ber(x_{st})$. If we observe $\alpha_{st}$ failures and and $\beta_{st}$ successes, the resulting posterior distribution on the variable $X_{st}$ is $\betadist(1+\alpha_{st},1+\beta_{st})$.
Given such a posterior, we may optimize
\begin{align}
  \label{eq:stoch1}
  \max_{y\in\Ys} &\,\;\Is(y; \E[X]), \text{ or }\\
  \label{eq:stoch2}
  \max_{y\in\Ys} &\,\; \E[\Is(y; X)].
\end{align}
\begin{proposition}
  Problems \eqref{eq:stoch1} and \eqref{eq:stoch2} are concave maximization problems over the (convex) set $\Ys$ and can be solved exactly.
\end{proposition}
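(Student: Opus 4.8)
The plan is to reduce both statements to one elementary fact: for every fixed failure-probability vector $x$ with all $x_{st} \in (0,1]$, the map $y \mapsto \Is(y;x)$ is concave on the (convex) budget set $\Ys$, which I may take to lie in $\reals_{\ge 0}^S$ since budgets are nonnegative. To see this, write $\prod_{(s,t)\in E} x_{st}^{y(s)} = \exp\!\left(\sum_{(s,t)\in E} (\ln x_{st})\,y(s)\right)$. The exponent is an affine function of $y$, so composing it with the convex increasing function $\exp$ gives a convex function of $y$; hence each $I_t(y;x) = 1 - \prod_{(s,t)\in E} x_{st}^{y(s)}$ is concave in $y$, and $\Is(y;x) = \sum_{t\in T} I_t(y;x)$ is concave as a finite sum of concave functions. (Here $x_{st}>0$ makes $\ln x_{st}$ finite; the boundary case $x_{st}=0$ occurs only on a posterior-null set and is harmless under the convention $0^0 = 1$ when $y(s)=0$.)

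For \eqref{eq:stoch1} this settles concavity immediately: the $\betadist(1+\alpha_{st},1+\beta_{st})$ posterior has mean $\E[X_{st}] = \tfrac{1+\alpha_{st}}{2+\alpha_{st}+\beta_{st}} \in (0,1)$, so $\Is(\,\cdot\,;\E[X])$ is concave and \eqref{eq:stoch1} maximizes a concave function over the convex set $\Ys$. For \eqref{eq:stoch2}, I would first record the uniform bound $0 \le \Is(y;x) \le \abs{T}$, valid for all $y \in \Ys$ and all $x \in [0,1]^E$, so $\E[\Is(y;X)]$ is a well-defined finite integral of functions that are (almost surely) concave in $y$; integrating the pointwise inequality $\Is(\lambda y_1 + (1-\lambda)y_2; x) \ge \lambda \Is(y_1;x) + (1-\lambda)\Is(y_2;x)$ against the posterior preserves it, so $\E[\Is(\,\cdot\,;X)]$ is concave on $\Ys$.

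It remains to check that each problem is a tractable convex program (``solved exactly''). Problem \eqref{eq:stoch1} already has an explicit smooth concave objective. For \eqref{eq:stoch2}, since the posteriors on distinct edges are independent, $\E[\Is(y;X)] = \sum_{t\in T}\big(1 - \prod_{(s,t)\in E}\E[X_{st}^{y(s)}]\big)$, and the Beta moments are available in closed form, $\E[X_{st}^{r}] = \tfrac{\Gamma(1+\alpha_{st}+r)\,\Gamma(2+\alpha_{st}+\beta_{st})}{\Gamma(1+\alpha_{st})\,\Gamma(2+\alpha_{st}+\beta_{st}+r)}$, so the objective is explicitly computable and differentiable; moreover $r \mapsto \E[X_{st}^{r}]$ is log-convex (the second derivative of its logarithm is a difference of trigamma values, which is positive since the trigamma function is positive and strictly decreasing), which gives an alternative concavity proof via products of log-convex functions. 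In both cases we are maximizing a differentiable concave function over the convex feasible set $\Ys$, which standard convex optimization (e.g.\ projected gradient ascent, or an interior-point method when $\Ys$ is polyhedral) solves to arbitrary accuracy with the usual guarantees.

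The only subtleties are bookkeeping: ensuring the power functions are well defined (needing $y \ge 0$ and the relevant $x_{st}$ bounded away from $0$, both of which hold here), and justifying that an average of concave functions is concave (immediate from the uniform bound $\Is \le \abs{T}$). I do not expect any substantive obstacle.
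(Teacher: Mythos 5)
Your proof is correct and follows essentially the same route as the paper: concavity of $y \mapsto \Is(y;x)$ for fixed $x$ (exponential of an affine function), preservation of concavity under expectation, and the closed-form Beta moments $\E[X_{st}^{y(s)}]$ to make \eqref{eq:stoch2} explicitly computable, which is exactly the computation the paper carries out in its appendix. The log-convexity aside is a nice redundant check but not a different argument in substance.
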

Concavity of \eqref{eq:stoch2} follows since it is an expectation over concave functions, and the problem
can be solved by stochastic gradient ascent or by explicitly computing gradients. 


Merely maximizing expectation does not explicitly account for volatility and hence risk. One option is to include variance \cite{ben-tal_robust_2000,bertsimas_theory_2011,
  atamturk_polymatroids_2008}:
\begin{equation}
  \label{eq:mean-plus-var}
  \min_{y\in\Ys}\, -\E[\Is(y; X)] + \ep \sqrt{\Var(\Is(y; X))},
\end{equation}
but in our case this CVaR formulation seems difficult:
\begin{fact}
    \label{fact:variance-bad}
    For $y$ in the nonnegative orthant, the term 
    $\sqrt{\Var(\Is(y; X))}$ need not be convex or concave, and need not be
    submodular or supermodular.
\end{fact}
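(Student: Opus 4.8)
Since the statement only asserts that these properties \emph{can} fail, the plan is to construct explicit small instances of Budget Allocation on which each fails. Because $\Var$ adds over the independent per-person terms in $\Is$, the phenomenon already appears with a single person, so I would take $|T|=1$, where $\Is(y;X) = 1 - \prod_{s} X_{s}^{y(s)}$ and $\Var(\Is(y;X)) = \Var(\prod_s X_s^{y(s)})$. Write $g(y) := \sqrt{\Var(\Is(y;X))}$, a deterministic function of the budget $y \in \R^d_{\ge0}$; I would analyze $d=1$ for convexity/concavity and $d=2$ for sub-/supermodularity, using the uniform prior $X_s \sim \betadist(1,1)$, for which $\E[X_s^c] = 1/(c+1)$ and all quantities are rational.

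For convexity and concavity I would take one channel, so
\[
  g(y)^2 = \tfrac{1}{2y+1} - \tfrac{1}{(y+1)^2} = \tfrac{y^2}{(2y+1)(y+1)^2},
  \qquad\text{i.e.}\qquad
  g(y) = \frac{y}{(y+1)\sqrt{2y+1}}.
\]
The three facts to extract are $g(0)=0$, $g(y)>0$ for all $y>0$, and $g(y)\to0$ as $y\to\infty$ (for a general Beta prior these follow from $X^0\equiv1$, non-degeneracy of $X$, and bounded convergence using $X^y\to0$ a.s.\ since $X\in(0,1)$). A nonnegative function on $[0,\infty)$ that vanishes at the origin, is positive somewhere, and tends to $0$ at infinity can be neither convex nor concave: convexity forces $g(y_0)\le\tfrac{y_0}{Y}g(Y)\to0$ for each fixed $y_0$ as $Y\to\infty$, a contradiction; and every concave function on $[0,\infty)$ that is bounded below is non-decreasing (a negative slope on some interval propagates by concavity and sends $g$ to $-\infty$), which together with $g\to0$ would force $g\equiv0$.

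For sub-/supermodularity I would take two channels, where independence gives
\begin{equation*}
  g(y_1,y_2)^2 = \frac{1}{(2y_1+1)(2y_2+1)} - \frac{1}{(y_1+1)^2(y_2+1)^2}.
\end{equation*}
The reduction I would use: submodularity of $g$ on $\R^2_{\ge0}$ implies the discrete coordinate marginal $m(y_2) := g(1,y_2) - g(0,y_2)$ is non-increasing in $y_2$, and supermodularity implies $m$ is non-decreasing (both follow from the lattice inequality $g(a)+g(b)\ge g(a\vee b)+g(a\wedge b)$ with $a=(1,s)$, $b=(0,t)$, the two-coordinate form of the Topkis criterion cited above). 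A direct evaluation then kills both: $m(0)=\tfrac{1}{2\sqrt3}>0$, $m(1)=\tfrac{\sqrt7-2\sqrt3}{12}<0$, and $m(y_2)\to0$ as $y_2\to\infty$. Hence $m(0)>m(1)$ rules out non-decreasing ($g$ is not supermodular), and $m(1)<0=\lim_{y_2\to\infty}m(y_2)$ rules out non-increasing ($g$ is not submodular).

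I do not expect a genuine obstacle: taking a square root generically destroys convexity, concavity, and modularity, so the only real work is picking an instance in which all the relevant signs are unambiguous, and the uniform prior reduces everything to a hand calculation. The one point deserving care is invoking the correct link between the lattice definition of (sub-/super-)modularity on a continuous box and monotonicity of discrete coordinate marginals, but that is exactly the two-coordinate specialization of the criterion already quoted in the text.
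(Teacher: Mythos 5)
Your proposal is correct, and while it starts from the same place as the paper --- small explicit instances with uniform $\betadist(1,1)$ priors so that $\E[X^c]=1/(c+1)$ --- the way you falsify each property is genuinely different. The paper works with a single two-channel, one-customer instance and computes the Hessian of $\sqrt{\Var(f(y))}$ symbolically (via Mathematica): at $y=(1,1)$ the Hessian is indefinite (killing convexity and concavity) and has a negative off-diagonal entry (killing supermodularity), while at $y=(3,3)$ the off-diagonal entry is positive (killing submodularity). You instead avoid second derivatives of the square root entirely: for convexity/concavity you use a one-channel instance and only the boundary behavior $g(0)=0$, $g>0$, $g(y)\to 0$, which is a clean qualitative argument that would survive any non-degenerate Beta prior; for modularity you apply the lattice inequality $g(a)+g(b)\ge g(a\vee b)+g(a\wedge b)$ with $a=(1,s)$, $b=(0,t)$ to reduce both properties to monotonicity of the first difference $m(y_2)=g(1,y_2)-g(0,y_2)$, and the three exact values $m(0)=\tfrac{1}{2\sqrt 3}>0$, $m(1)=\tfrac{\sqrt 7-2\sqrt 3}{12}<0$, $\lim_{y_2\to\infty}m(y_2)=0$ rule out both monotone directions at once (your arithmetic checks out). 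What your route buys is hand-verifiability, no reliance on symbolic computation, and no differentiability assumption (you refute the lattice definition directly, which is the more general one); what the paper's route buys is a single instance and single point certifying non-convexity, non-concavity, and non-supermodularity simultaneously. Both are complete proofs of the stated fact.
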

This observation does not rule out a solution, but the apparent difficulties further motivate a robust formulation that, as we will see, is amenable to optimization.

\subsection{Robust Optimization}\label{subsec:robust}
The focus of this work is the robust version of Budget Allocation, where we allow an adversary to arbitrarily set the parameters $x$ within an uncertainty set $\Xs$. This uncertainty set may result, for instance, from a known distribution, or simply assumed bounds. Formally, we solve
\begin{equation}
    \label{eq:minmax}
    \max_{y \in \Ys} \min_{x \in \Xs} \;\Is(y; x),
\end{equation}  
where $\Ys \subset \reals^S_+$ is a convex set with an efficient projection
oracle, and $\Xs$ is an uncertainty set containing an estimate $\hat x$. In the sequel, we use uncertainty sets $\Xs = \left\{ x \in \boxset(l,u) : R(x)
 \leq B \right\}$, where $R$ is a distance (or divergence) from the estimate
$\hat x$, and $\boxset(l,u)$ is the box $\prod_{(s,t)\in E}
[l_{st},u_{st}]$. The intervals $[l_{st},u_{st}]$ can be thought of as either
confidence intervals around $\hat x$, or, if $[l_{st},u_{st}]=[0,1]$, enforce that 
each $x_{st}$ is a valid probability.

Common examples of uncertainty sets used in Robust Optimization are \emph{Ellipsoidal} and \emph{D-norm uncertainty sets} \cite{bertsimas_theory_2011}. Our algorithm in Section~\ref{sec:ccsfm} applies to both.

\textbf{Ellipsoidal uncertainty.}
The ellipsoidal or quadratic uncertainty set is defined by
\begin{equation*}
    \Xs^Q(\gamma) = \{ x \in \boxset(0,1) : (x - \hat x)^T \Sigma^{-1} (x - \hat x) \leq \gamma \},
\end{equation*}
where $\Sigma$ is the covariance of the random vector $X$ of probabilities
distributed according to our Beta posteriors. In our case,
since the distributions on each $x_{st}$ are independent, $\Sigma^{-1}$ is 
actually diagonal. Writing $\Sigma = \diag(\sigma^2)$, we have
\begin{equation*}
    \Xs^Q(\gamma) = \Big\{ x \in \boxset(0,1) : \sum_{(s,t) \in E} R_{st}(x_{st})\leq \gamma \Big\},
\end{equation*}
where $R_{st}(x) = (x_{st} - \hat x_{st})^2 \sigma_{st}^{-2}$.

\textbf{D-norm uncertainty.}
The D-norm uncertainty set is similar to an $\ell_1$-ball around $\hat x$, and is defined as
\begin{align*}
    \Xs^D(\gamma) =& \Big\{ x : \exists c\in \boxset(0,1) \; \mathrm{s.t.} \vphantom{\sum_{(s,t) \in E}} \Big. \\[-3pt] 
    &x_{st} = \hat x_{st} + (u_{st} - \hat x_{st}) c_{st}, \;
     \Big. \sum_{(s,t)\in E} c_{st} \leq \gamma \Big\}.
\end{align*}
Essentially, we allow an adversary to increase $\hat x_{st}$ up to some upper
bound $u_{st}$, 
subject to some total budget $\gamma$ across
all terms $x_{st}$.
The set $\Xs^D(\gamma)$ can be rewritten as
\begin{equation*}
    \Xs^D(\gamma) = \Big\{ x \in \boxset(\hat x, u) : \sum_{(s,t) \in E} R_{st}(x_{st}) \leq \gamma \Big\},
\end{equation*}
where $R_{st}(x_{st}) = (x_{st} - \hat x_{st}) / (u_{st} - \hat x_{st})$ is
the fraction of the interval $[\hat x_{st}, u_{st}]$ we have used up in
increasing $x_{st}$.

The min-max formulation $\max_{y\in\Ys} \min_{x\in\Xs} \Is(y;x)$ has several
merits: the model is not tied to a specific learning algorithm for the
probabilities $x$ as long as we can choose a suitable confidence set.
Moreover, this formulation allows to fully hedge against a worst-case scenario.

\section{Optimization Algorithm}

As noted above, 
the function $\Is(y;x)$ is concave as a function of $y$ for fixed $x$. As a pointwise minimum of concave functions, $F(y) := \min_{x \in \Xs} \Is(y;x)$ is concave. Hence, if we can compute subgradients of $F(y)$, we can solve our max-min-problem via the subgradient method, as outlined in Algorithm \ref{alg:subgrad}.

A subgradient $g_y \in \partial F(y)$ at $y$ is given by the gradient of $\Is(y;x^*)$ for the minimizing $x^* \in \arg\min_{x\in\Xs} \Is(y;x)$, i.e., $g_y = \nabla_y \Is(y;x^*)$. Hence, we must be able to compute $x^*$ for any $y$.
We also obtain a duality gap: for any $x',y'$ we have
\begin{equation}
    \label{eq:duality-gap}
    \min_{x\in\Xs} \Is(y'; x) 
    \leq \max_{y \in \Ys} \min_{x\in\Xs} \Is(y; x) 
    \leq \max_{y \in \Ys} \Is(y; x').
\end{equation}
This means we can estimate the optimal value $\Is^*$ and
use it in Polyak's
stepsize rule for the subgradient method
\cite{polyak_introduction_1987}.

\begin{algorithm}[tb]
    \caption{Subgradient Ascent}
    \label{alg:subgrad}
\begin{algorithmic}
    \STATE {\bfseries Input:} suboptimality tolerance $\ep > 0$, initial feasible budget $y^{(0)} \in \Ys$
    \STATE {\bfseries Output:} $\ep$-optimal budget $y$ for Problem~\eqref{eq:minmax}
    \REPEAT
    \STATE $x^{(k)} \leftarrow \arg\min_{x \in \Xs} \Is(y^{(k)}; x)$
    \STATE $g^{(k)} \leftarrow \nabla_y \Is(y^{(k)}; x^{(k)})$
    \STATE $L^{(k)} \leftarrow \Is(y^{(k)}; x^{(k)})$
    \STATE $U^{(k)} \leftarrow \max_{y \in \Ys} \Is(y; x^{(k)})$
    \STATE $\gamma^{(k)} \leftarrow (U^{(k)} - L^{(k)}) / \lVert g^{(k)} \rVert^2_2$
    \STATE $y^{(k+1)} \leftarrow \proj_\Ys ( y^{(k)} + \gamma^{(k)} g^{(k)} )$     \STATE $k \leftarrow k+1$
    \UNTIL{$U^{(k)} - L^{(k)} \leq \ep$}
\end{algorithmic}
\end{algorithm}

But $\Is(y;x)$ is not convex in $x$, and not even quasiconvex. For example, standard
methods \citep[Chapter 12]{wainwright_fundamental_2004} imply that $f(x_1,x_2,x_3) = 1-x_1x_2 -\sqrt{x_3}$ is not quasiconvex on $\reals^3_+$. 
Moreover, the above-mentioned signomial optimization techniques do not apply for an exact solution either.
So, it is not immediately clear that we can solve the inner optimization problem.

The key insight we will be using is that $\Is(y;x)$ has a different beneficial property: while not convex, $\Is(y;x)$ as a function of $x$ is \emph{continuous submodular}.
\begin{lemma}
    \label{lem:submod-product}
    Suppose we have $n \geq 1$ differentiable functions $f_i : \reals \to
    \reals_+ $, for $i=1,\dots,n$, either all nonincreasing or all
    nondecreasing. Then, $f(x) = \prod_{i=1}^n f_i(x_i)$ is a continuous
    supermodular function from $\reals^n$ to $\reals_+ $. 
\end{lemma}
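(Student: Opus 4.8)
The plan is to verify supermodularity directly from the characterization of continuous submodularity/supermodularity via the mixed second-order behavior, i.e., a twice-differentiable $f$ is supermodular iff $\frac{\partial^2 f}{\partial x_i \partial x_j} \geq 0$ for all $i \neq j$ (the supermodular analogue of the Topkis criterion quoted in the excerpt). Since the $f_i$ are only assumed differentiable, not twice differentiable, I would either (a) state the argument under a $C^2$ assumption and then remove it by a standard smoothing/approximation argument, or (b) argue directly from the two-point definition \eqref{eq:subm_lattice}. I will sketch both but lead with the clean computational version.

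First, assume each $f_i \in C^2$. Then $f(x) = \prod_{i=1}^n f_i(x_i)$ has, for $i \neq j$,
\begin{equation*}
    \frac{\partial^2 f}{\partial x_i \partial x_j}(x) = f_i'(x_i)\, f_j'(x_j) \prod_{\ell \neq i,j} f_\ell(x_\ell).
\end{equation*}
The product $\prod_{\ell \neq i,j} f_\ell(x_\ell)$ is nonnegative because each $f_\ell$ maps into $\reals_+$. The factor $f_i'(x_i) f_j'(x_j)$ is nonnegative because, by hypothesis, all the $f_i$ are monotone in the same direction: either all nondecreasing (so every $f_i' \geq 0$) or all nonincreasing (so every $f_i' \leq 0$), and in the latter case the product of two nonpositive numbers is again nonnegative. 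Hence every mixed partial is $\geq 0$, which is exactly supermodularity on $\reals^n$; the codomain being $\reals_+$ is immediate since each factor is nonnegative.

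To drop the $C^2$ assumption, I would note that continuous supermodularity is preserved under pointwise limits, and approximate each differentiable $f_i$ by a sequence of $C^2$ (or smooth) functions $f_i^{(k)}$ that are monotone in the same direction and converge to $f_i$ pointwise (e.g. mollification, which preserves monotonicity); then $f^{(k)} = \prod_i f_i^{(k)} \to f$ pointwise and each $f^{(k)}$ is supermodular by the above, so $f$ is supermodular. Alternatively, a fully elementary route avoids smoothing entirely: for supermodularity via \eqref{eq:subm_lattice} it suffices to check that $f(x) + f(y) \geq f(x \vee y) + f(x \wedge y)$, and since this inequality only involves coordinates pairwise it reduces to the two-variable case, where $g(s,t) = f_1(s) f_2(t)$ with $f_1,f_2 \geq 0$ comonotone; expanding $g(s_1,t_1) + g(s_2,t_2) - g(s_1 \vee s_2, t_1 \vee t_2) - g(s_1 \wedge s_2, t_1 \wedge t_2)$ and using that $(f_1(s_1) - f_1(s_2))(f_2(t_1) - f_2(t_2)) \geq 0$ by comonotonicity gives the claim after a short rearrangement. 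The main obstacle is purely the regularity bookkeeping — making sure the monotone mollification step is stated correctly (or, if one prefers the elementary route, doing the two-variable rearrangement cleanly); the sign analysis of the mixed partials itself is routine once one observes that comonotonicity is exactly what forces $f_i'(x_i) f_j'(x_j) \geq 0$.
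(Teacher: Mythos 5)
Your main argument is exactly the paper's proof: compute the mixed partials $\frac{\partial^2 f}{\partial x_i \partial x_j} = f_i'(x_i) f_j'(x_j) \prod_{\ell \neq i,j} f_\ell(x_\ell)$, observe that comonotonicity makes $f_i'(x_i)f_j'(x_j) \geq 0$ and nonnegativity of the $f_\ell$ does the rest, and invoke the Topkis criterion. Your worry about the $C^2$ assumption is reasonable to flag but not actually an obstacle here, since for a separable product the off-diagonal cross-partials involve only \emph{first} derivatives of the factors (and your elementary two-point rearrangement, which the paper does not give, disposes of the regularity issue entirely).
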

\begin{proof}
  For $n=1$, the resulting function
is modular and therefore supermodular.
In the case $n\geq2$,
we simply need to compute derivatives. The mixed derivatives are 
\begin{equation}
    \frac{\partial f}{\partial x_i \partial x_j}
    = f_i^\prime(x_i) f_j^\prime(x_j) \cdot \prod_{k \not = i,j} f_k(x_k).
\end{equation}
By monotonicity, $f_i^\prime$ and $f_j^\prime$ have the same sign, so their
product is nonnegative, and since each $f_k$ is nonnegative, the entire
expression is nonnegative. Hence, $f(x)$ is continuous supermodular by Theorem 3.2 of~\citep{topkis1978minimizing}.
\end{proof}

\begin{corollary}
    \label{corollary:submodular}
    The influence function $\Is(y;x)$ defined in Section \ref{sec:model} is continuous
    submodular in $x$ over the nonnegative orthant, for each $y \geq 0$.
\end{corollary}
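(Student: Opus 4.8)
The plan is to apply Lemma~\ref{lem:submod-product} termwise and then use that continuous submodularity is preserved under nonnegative sums and under negation-plus-constant. Recall that, with $x_{st} = 1-p_{st}$,
\[
  \Is(y;x) \;=\; \sum_{t\in T}\Big(1 - \prod_{(s,t)\in E} x_{st}^{\,y(s)}\Big).
\]
Fix a budget $y \geq 0$ and a customer $t \in T$, and consider the single product $g_t(x) := \prod_{(s,t)\in E} x_{st}^{\,y(s)}$, viewed as a function of the whole vector $x \in \reals^E_+$ (it simply does not depend on the coordinates $x_{s't'}$ with $t' \neq t$). This is exactly of the form covered by Lemma~\ref{lem:submod-product}, with one factor $f_{st}(z) = z^{\,y(s)}$ per edge $(s,t)\in E$ incident to $t$: each $f_{st}$ maps $\reals_+$ into $\reals_+$ and is nondecreasing since $y(s) \geq 0$. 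Hence each $g_t$ is continuous supermodular on $\reals^E_+$ — a function that is constant in some coordinates has vanishing mixed partials involving those coordinates, so introducing the ``dummy'' variables $x_{s't'}$, $t'\neq t$, changes nothing.

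Next, $1 - g_t(x)$ is continuous submodular, because negating a supermodular function yields a submodular one and adding the constant $1$ does not affect the defining inequality~\eqref{eq:subm_lattice} (equivalently, it does not affect the off-diagonal Hessian entries). Finally, $\Is(y;x) = \sum_{t\in T}(1 - g_t(x))$ is a finite sum of continuous submodular functions, hence continuous submodular, since~\eqref{eq:subm_lattice} (equivalently, nonpositivity of off-diagonal Hessian entries) is closed under addition. This gives the claim.

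The only technical subtlety I anticipate is boundary behavior: $f_{st}(z) = z^{\,y(s)}$ fails to be differentiable at $z = 0$ when $0 < y(s) < 1$, so the smoothness hypothesis of Lemma~\ref{lem:submod-product} applies only on the open orthant $\reals^E_{>0}$. This is harmless: $g_t$ is continuous on all of $\reals^E_+$, and the lattice inequality $g_t(x) + g_t(x') \geq g_t(x\vee x') + g_t(x\wedge x')$ holds on the dense subset $\reals^E_{>0}$ and therefore extends to its closure $\reals^E_+$ by continuity. (Alternatively, edges with $y(s) = 0$ can be dropped outright, since then $f_{st}\equiv 1$, and for $y(s) \geq 1$ the factor is already smooth on $\reals_+$.) Beyond this bookkeeping, the statement is an essentially immediate consequence of Lemma~\ref{lem:submod-product} together with the stability of submodularity under the two operations used above.
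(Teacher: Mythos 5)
Your proof is correct and follows essentially the same route as the paper's: apply Lemma~\ref{lem:submod-product} to each product $\prod_{(s,t)\in E} x_{st}^{y(s)}$, then use closure of continuous submodularity under negation-plus-constant and summation. Your extra care about non-differentiability at the boundary when $0 < y(s) < 1$ is a welcome refinement the paper glosses over, but it does not change the argument.
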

\begin{proof}
    Since submodularity is preserved under summation, it suffices to show
    that each function $I_t(y)$ is continuous submodular. By Lemma \ref{lem:submod-product},
    since $f_s(z) = z^{y(s)}$ is nonnegative and monotone nondecreasing for $y(s) \geq 0$,
    the product $\prod_{(s,t) \in E} x_{st}^{y(s)}$ is continuous supermodular in $x$.
    Flipping the sign and adding a constant term yields $I_t(y)$, which is hence 
    continuous submodular. 
\end{proof}

\begin{conjecture}
    \label{conjecture:strong-duality}
    Strong duality holds, i.e. 
    \begin{equation}
        \max_{y\in\Ys} \min_{x\in\Xs} \Is(y;x)
        = \min_{x\in\Xs}\max_{y\in\Ys}  \Is(y;x).
    \end{equation}
\end{conjecture}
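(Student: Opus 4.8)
The goal is a minimax (saddle-point) theorem for $\Is$ over the compact convex sets $\Ys$ and $\Xs$. Weak duality, $\max_{y}\min_{x}\Is(y;x)\le\min_{x}\max_{y}\Is(y;x)$, is immediate, so only the reverse inequality is at stake. Since $\Is(\cdot;x)$ is concave and continuous for every $x$ and $\Ys$ is compact and convex, the classical route is Sion's minimax theorem, which would apply if $\Is(y;\cdot)$ were quasiconvex on $\Xs$ for every $y$. As noted above, $\Is(y;\cdot)$ is not even quasiconvex, so every hypothesis except convexity-in-$x$ is in place and the whole difficulty is to compensate for its absence; submodularity (Corollary~\ref{corollary:submodular}) is not, by itself, known to serve this purpose in any minimax theorem.

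A first reduction isolates the obstruction. Suppose $\min_x\max_y\Is(y;x) > v := \max_y\min_x\Is(y;x)$, and set $2\delta := \min_x\max_y\Is(y;x) - v > 0$. For each $x\in\Xs$ there is a $y$ with $\Is(y;x) > v+\delta$; by continuity of $\Is$ and compactness of $\Xs$, finitely many budgets $y_1,\dots,y_m$ suffice, so $\max_i\Is(y_i;x) > v+\delta$ for all $x$, hence $\min_{x\in\Xs}\max_{\mu\in\Delta_m}\sum_i\mu_i\Is(y_i;x)\ge v+\delta$. If strong duality held for this finite-support subgame, some mixed budget $\mu^\star$ would satisfy $\min_x\sum_i\mu_i^\star\Is(y_i;x)\ge v+\delta$; concavity of $\Is(\cdot;x)$ and convexity of $\Ys$ then give $\Is(\bar y;x)\ge\sum_i\mu_i^\star\Is(y_i;x)$ for $\bar y=\sum_i\mu_i^\star y_i\in\Ys$, so $\min_x\Is(\bar y;x)\ge v+\delta$, contradicting the definition of $v$. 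Thus it would suffice to prove strong duality for the subgame $\max_{\mu\in\Delta_m}\min_{x\in\Xs}\sum_i\mu_i\Is(y_i;x)$, which is linear in $\mu$ but still submodular-and-not-quasiconvex in $x$: the reduction relocates the obstruction without removing it.

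The most promising substantive route uses the change of variables $z_{st}=\log x_{st}$. Under it, $\Is(y;\cdot)$ becomes $\sum_t(1-e^{L_t(z)})$ with $L_t(z)=\sum_{(s,t)\in E}y(s)\,z_{st}$ affine in $z$, a sum of concave functions of disjoint coordinate blocks, hence \emph{concave} in $z$; moreover $\Xs$ maps to a compact convex set for D-norm uncertainty (and for ellipsoidal uncertainty when, say, $l_{st}\ge\hat x_{st}/2$, so that $R_{st}\circ\exp$ is convex). For a pure box constraint, concavity in $z$ forces the adversary's minimizer to a vertex of the box — a finite set independent of $y$ — which one would like to feed into the finite-$x$ subgame above and close the loop with Sion. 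The snag is that this collapse to vertices applies to $\min_x\Is(y;x)$ but not to $\min_x\max_y\Is(y;x)$ (a pointwise maximum of concave-in-$z$ functions need not be concave), so the naive chain of inequalities does not close. I would instead replace the inner minimization by its convex reformulation in the spirit of \cite{bach_submodular_2015} — continuous submodular minimization over a box is equivalent to a convex problem over thresholded functions, and Section~\ref{sec:ccsfm} extends this to the additional constraint $R(x)\le B$ — and then try to run a minimax argument in the \emph{lifted} variables of that reformulation.

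The step I expect to be the genuine obstacle is precisely this last one: the convex reformulation of continuous submodular minimization lives in a different space than $x$ (a space of functions/level sets), so it does not plug into a saddle-point argument jointly with $y$ in any routine way, and there is no off-the-shelf minimax theorem in which submodularity plays the role of quasiconvexity. This is why the statement is only a conjecture. I would emphasize, though, that its resolution is not needed for Algorithm~\ref{alg:subgrad}: the subgradient ascent converges to $\max_y\min_x\Is(y;x)$ regardless, and the inequality \eqref{eq:duality-gap} supplies a valid upper bound for Polyak's stepsize; strong duality would only certify that this bound, and the returned value $U^{(k)}$, are asymptotically tight rather than conservative.
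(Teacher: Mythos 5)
The paper does not prove this statement: it is stated explicitly as Conjecture~\ref{conjecture:strong-duality} and left open, supported only by the remark that Sion's theorem would apply if $\Is(y;\cdot)$ were quasiconvex (it is not) and by the empirical observation that the duality gap in \eqref{eq:duality-gap} is observed to converge to zero in practice. Your writeup is consistent with this status: you do not claim a proof, and the partial arguments you do give are sound. Weak duality is indeed immediate; the compactness-plus-continuity reduction to a finite collection $y_1,\dots,y_m$ and the subsequent use of concavity of $\Is(\cdot;x)$ to de-randomize a mixed budget $\mu^\star$ into $\bar y=\sum_i\mu_i^\star y_i$ are both correct, and you rightly note that this only relocates the difficulty, since $\sum_i\mu_i\Is(y_i;\cdot)$ is still continuous submodular but not quasiconvex in $x$. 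The substitution $z_{st}=\log x_{st}$ making $\Is(y;\cdot)$ concave in $z$ is a correct and genuinely useful observation (it is why a box-constrained adversary minimizes at a vertex), and you correctly diagnose why the resulting chain of inequalities fails to close: mixing over vertices goes the wrong way under concavity in $z$. Your final point also matches the paper's stance — Algorithm~\ref{alg:subgrad} and the bound \eqref{eq:duality-gap} do not require the conjecture, though note that without strong duality the stopping criterion $U^{(k)}-L^{(k)}\leq\ep$ is only guaranteed to certify optimality \emph{if} it triggers, not to trigger. In short: there is no paper proof to compare against, your attempt correctly identifies why the standard minimax machinery fails, and it contributes a reduction and a reparametrization that the paper itself does not record.
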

If strong duality holds, then the duality gap $\max_{y\in\Ys} \Is(y;x^*) -
\min_{x\in\Xs} \Is(y^*;x)$ in Equation~\eqref{eq:duality-gap} is zero at
optimality. 
If $\Is(y;x)$ were
quasiconvex in $x$, strong duality would hold by Sion's min-max theorem, but this is not the case.
In practice, we observe that the duality gap always
converges to zero. 

\citet{bach_submodular_2015} demonstrates how to
minimize a continuous submodular function $H(x)$ subject to box
constraints $x \in \boxset(l,u)$, up to an arbitrary suboptimality gap
$\ep>0$. The constraint set $\Xs$ in our Robust Budget Allocation problem,
however, has box constraints with an additional constraint $R(x)
\leq B$. This case is not addressed in any previous work.
Fortunately, for a large class of 
functions $R$, there is still an efficient algorithm for continuous submodular
minimization, which we present in the next section.

\subsection{Constrained Continuous Submodular Function Minimization}
\label{sec:ccsfm}
We next address an algorithm for minimizing a monotone continuous submodular function $H(x)$ subject to box constraints $x \in \boxset(l,u)$ and a constraint $R(x) \leq B$:
\begin{equation}
    \label{eq:constrained-prob}
    \begin{array}{ll}
        \text{minimize} & H(x) \\
        \text{s.t.} & R(x) \leq B \\
        & x \in \boxset(l,u).
    \end{array}
\end{equation}
If $H$ and $R$ were convex, the constrained problem would be 
equivalent to solving, with the right Lagrange multipler $\lambda^* \geq 0$:
\begin{equation}
    \label{eq:regularized-prob}
    \begin{array}{ll}
        \text{minimize} & H(x) + \lambda^* R(x) \\
        \text{s.t.} & x \in \boxset(l,u).
    \end{array}
\end{equation}
Although $H$ and $R$ are not necessarily convex here, it turns out that a similar approach indeed applies. The main idea of our approach bears similarity with \citep{nagano2011size} for the set function case, but our setting with continuous functions and various uncertainty sets is more general, and requires more argumentation. We outline our theoretical results here, and defer further implementation details and proofs to the appendix.

Following \cite{bach_submodular_2015}, we discretize the problem; for a sufficiently fine discretization, we will achieve arbitrary accuracy. Let $A$ be an \emph{interpolation mapping} that maps the
discrete set $\prod_{i=1}^n [k_i]$ into $\boxset(l,u) = \prod_{i=1}^n [l_i,u_i]$ via
the componentwise interpolation functions $A_i : [k_i] \to [l_i,u_i]$. We say
$A_i$ is \emph{$\delta$-fine} if $A_i(x_i + 1) - A_i(x_i) \leq \delta$ for all $x_i
\in \{0, 1, \dots, k_i-2\}$, and we say the full interpolation function $A$ is
$\delta$-fine if each $A_i$ is $\delta$-fine. 

This mapping yields functions $H^\delta : \prod_{i=1}^n [k_i] \to \reals$ and $R^\delta : \prod_{i=1}^n [k_i] \to \reals$ via $H^\delta(x) = H(A(x))$ and $R^\delta(x) =
R(A(x))$. $H^\delta$ is lattice submodular (on the integer lattice).
This construction leads to a reduction of Problem~\eqref{eq:constrained-prob} to a submodular minimization problem over the integer lattice:
\begin{equation}
    \label{eq:regularized-prob-discrete}
    \begin{array}{ll}
        \text{minimize} & H^\delta(x) + \lambda R^\delta(x) \\
        \text{s.t.} & x \in \prod_{i=1}^n [k_i].
    \end{array}
\end{equation}
Ideally, there should then exist a $\lambda$ such that the associated minimizer $x(\lambda)$ yields a close to optimal solution for the constrained problem. Theorem~\ref{thm:constrained-opt} below states that this is indeed the case.

Moreover, a second benefit of submodularity is that we can find the entire solution path for Problem \eqref{eq:regularized-prob-discrete} by solving a single optimization problem.
\begin{lemma}
    \label{lemma:prob-equiv}
    Suppose $H$ is continuous submodular, and suppose the regularizer $R$ is
    strictly increasing and separable: $R(x) = \sum_{i=1}^n R_i(x_i)$. Then
    we can recover a minimizer $x(\lambda)$ for the induced discrete Problem
    \eqref{eq:regularized-prob-discrete} for any $\lambda \in \reals$ by
    solving a single convex optimization problem.
\end{lemma}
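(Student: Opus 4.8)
The plan is to combine the continuous Lov\'asz-type extension of \citet{bach_submodular_2015} with the classical fact that a \emph{parametric} submodular minimization problem $\min_S g(S)+\lambda\,w(S)$ has its entire solution path encoded in a single separable-convex-regularized minimization (the phenomenon exploited for set functions in \citep{nagano2011size}).

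\emph{Step 1: reduce \eqref{eq:regularized-prob-discrete} to a set function on a ring family.} Encode $x\in\prod_{i=1}^n[k_i]$ by the down-set $S(x)=\{(i,j):1\le j\le x_i\}$ in the ground set $V=\{(i,j):i=1,\dots,n,\ 1\le j\le k_i-1\}$; this is the Birkhoff correspondence \citep{birkhoff37} between the integer box and the distributive lattice $\mathcal R$ of down-sets. Since $H$ is continuous submodular and $A$ is componentwise monotone, $H^\delta$ is lattice submodular (Corollary~\ref{corollary:submodular}, via Theorem~3.2 of \citep{topkis1978minimizing}), hence induces a submodular set function $g$ on $\mathcal R$ with $g(S(x))=H^\delta(x)$. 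Separability gives $R^\delta(x)=c+\sum_{(i,j)\in S(x)}w_{ij}$ with $c=\sum_i R_i(A_i(0))$ and increments $w_{ij}=R_i(A_i(j))-R_i(A_i(j-1))$, which are \emph{strictly positive} because each $R_i$ is strictly increasing. So \eqref{eq:regularized-prob-discrete} is equivalent, up to the additive constant $\lambda c$, to $\min_{S\in\mathcal R} g(S)+\lambda\,w(S)$, a parametric submodular minimization with a strictly positive modular perturbation.

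\emph{Step 2: a single convex program for the whole path.} Let $g_L$ be the continuous (Lov\'asz-type) extension of $g$ from \citep{bach_submodular_2015}, a convex function whose effective domain is the order polytope of the chain poset underlying $\mathcal R$, so that the super-level sets of any point in its domain lie in $\mathcal R$ and minimizing $g_L$ recovers a minimizer of $g$ over $\mathcal R$. Solve the single strictly convex problem
\begin{equation}
  \label{eq:path-convex}
  z^\star=\argmin_{z}\; g_L(z)+\tfrac12\sum_{(i,j)\in V}w_{ij}\,z_{ij}^2 .
\end{equation}
By the level-set (decomposition) property of submodular minimization regularized by a separable strictly convex function --- the ``chain rule'' for the Lov\'asz extension --- for every $\lambda$ the set $S_\lambda=\{(i,j):z^\star_{ij}>\lambda\}$ minimizes $S\mapsto g(S)+\sum_{(i,j)\in S}\tfrac{d}{dz}\bigl(\tfrac12 w_{ij}z^2\bigr)\big|_{z=\lambda}=g(S)+\lambda\,w(S)$ over $\mathcal R$, and $S_\lambda\in\mathcal R$ automatically. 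Decoding, $x(\lambda)_i=\lvert\{j:(i,j)\in S_\lambda\}\rvert$ is then a minimizer of \eqref{eq:regularized-prob-discrete} for that $\lambda$: one solves \eqref{eq:path-convex} once and reads off $x(\lambda)$ for every $\lambda$ by thresholding $z^\star$. Strict positivity of the $w_{ij}$ makes \eqref{eq:path-convex} strictly convex (so $z^\star$, and the path, is unique) and makes $\lambda\mapsto S_\lambda$ sweep out the relevant range.

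\emph{Main obstacle.} The technical heart is the level-set identity in the lattice / continuous-submodular setting: one must adapt the base-polytope and greedy-algorithm arguments behind \citep{bach_submodular_2015} to the objective \eqref{eq:path-convex}, showing that the sublevel sets of $z^\star$ are exactly the minimizers of the $\lambda$-regularized discrete problem, that the chain structure is preserved so each $x(\lambda)$ is a genuine lattice point and is monotone nonincreasing in $\lambda$, and that the desired range of $\lambda\in\reals$ is realized (which may require dropping or enlarging the box so $\tfrac{d}{dz}(\tfrac12 w_{ij}z^2)$ sweeps all of $\reals$, or simply restricting to $\lambda\ge 0$ as occurs for the Lagrangian relaxation we actually use). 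The remaining ingredients --- submodularity of $g$, positivity of the weights from strict monotonicity of $R$, and the encoding/decoding maps --- are routine.
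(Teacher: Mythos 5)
Your proof is correct and is essentially the paper's argument in different clothing: both hinge on telescoping the separable regularizer into strictly positive increments $R_i^\delta(j)-R_i^\delta(j-1)$ (strict positivity coming from strict monotonicity of $R$), attaching to each increment a strictly convex quadratic $a_{ij}(t)=\tfrac12 t^2\,[R_i^\delta(j)-R_i^\delta(j-1)]$, and recovering every $x(\lambda)$ by thresholding the minimizer of a single separably regularized convex program. The paper obtains the required level-set identity by citing Theorem~4 of \citet{bach_submodular_2015} directly in the lattice setting --- which is precisely the ingredient you flag as the ``main obstacle'' --- while your route through the Birkhoff encoding and the set-function decomposition theorem is an equivalent packaging of the same machinery, since $h_\downarrow$ on $\prod_i \reals_\downarrow^{k_i-1}$ is exactly the Lov\'asz extension of the induced set function restricted to the order polytope of the chain poset.
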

The problem in question arises from a relaxation $h_\downarrow$ that extends $H^\delta$ in each coordinate $i$ to a function on distributions over the domain $[k_i]$. These distributions are represented via their inverse cumulative distribution functions $\rho_i$, which take the coordinate $x_i$ as input, and output the probability of exceeding $x_i$. The function $h_\downarrow$ is an analogue of the \emph{Lov\'asz extension} of set functions to continuous submodular functions \citep{bach_submodular_2015}, it is convex and coincides with $H^\delta$ on lattice points. 

Formally, this resulting single optimization problem is:
\begin{equation}
    \label{eq:rho-prob}
    \begin{array}{ll}
        \text{minimize} & h_\downarrow(\rho) + \sum_{i=1}^n \sum_{j_i=1}^{k_i-1} a_{i x_i}(\rho_{i}(x_i)) \\
        \text{s.t.} & \rho \in \prod_{i=1}^n \reals_\downarrow^{k_i - 1}
    \end{array}
\end{equation}
where $\reals_\downarrow^k$ refers to the set of ordered vectors $z \in \reals^k$ that satisfy $z_1 \geq z_2 \geq \dots \geq z_k$,
the notation $\rho_i(x_i)$ denotes the $x_i$-th coordinate of the vector $\rho_i$,
and the $a_{i x_i}$ are strictly convex functions given by
\begin{equation}
  a_{i x_i}(t) = \frac12 t^2 \cdot [R_i^\delta(x_i) - R_i^\delta(x_i-1)].
\end{equation}

Problem~\eqref{eq:rho-prob} can be solved by Frank-Wolfe methods \citep{frank_algorithm_1956,dunn1978conditional,lacoste-julien_convergence_2016,jaggi_revisiting_2013}.
This is because the greedy algorithm for computing subgradients of the Lov\'asz extension can be generalized, and yields a linear optimization oracle for the dual of Problem~\eqref{eq:rho-prob}.
We detail the relationship between Problems~\eqref{eq:regularized-prob-discrete} and \eqref{eq:rho-prob}, as well as how to implement the Frank-Wolfe methods,
in Appendix~\ref{sec:appendix-ccsfm}.

Let $\rho^*$ be the optimal solution for Problem~\eqref{eq:rho-prob}. For any $\lambda$, we obtain a rounded solution $x(\lambda)$ for Problem~\eqref{eq:regularized-prob-discrete} 
by thresholding: we set $x(\lambda)_i = \max\{j \mid 1 \leq j \leq k_i-1, \; \rho^*_{i}(j) \geq \lambda\}$, or zero if $\rho^*_i(j) < \lambda$ for all $j$. 
Each $x(\lambda')$ is the optimal solution for Problem~\eqref{eq:regularized-prob-discrete} with $\lambda=\lambda'$.
We use the largest parameterized solution $x(\lambda)$ that is still feasible, i.e. the solution $x(\lambda^*)$ where $\lambda^*$ solves
\begin{equation}
    \label{eq:optlambda}
    \begin{array}{ll}
        \text{min} & H^\delta(x(\lambda)) \\
        \text{s.t.} & \lambda \geq 0 \\
        & R^\delta(x(\lambda)) \leq B.
    \end{array}
\end{equation}
This $\lambda^*$ can be found efficiently via binary search or a linear scan.
\begin{theorem}
  \label{thm:constrained-opt}
  Let $H$ be continuous submodular and monotone decreasing, with $\ell_\infty$-Lipschitz constant $G$, and let $R$ be strictly increasing and separable. Assume all entries $\rho^*_i(j)$ of the optimal solution $\rho^*$ of Problem \eqref{eq:rho-prob} are distinct. Let $x' = A(x(\lambda^*))$ be the thresholding corresponding to the optimal solution
  $\lambda^*$ of Problem \eqref{eq:optlambda}, mapped back into the original
  continuous domain $\Xs$. Then $x'$ is feasible for the continuous Problem
  \eqref{eq:constrained-prob}, and is a $2G\delta$-approximate solution:
  \begin{equation*}
    H(x') \leq 2G \delta + \min_{x \in \boxset(l,u), \; R(x) \leq B} H(x).
  \end{equation*}
\end{theorem}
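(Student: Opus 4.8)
\emph{Proof plan.} I would prove this in four steps, the first two of which are quick. \emph{Feasibility of $x'$.} By construction $x' = A(x(\lambda^*)) \in \boxset(l,u)$, since the interpolation mapping lands in the box, and $R(x') = R(A(x(\lambda^*))) = R^\delta(x(\lambda^*)) \le B$ by the constraint defining $\lambda^*$ in Problem~\eqref{eq:optlambda}; that problem is feasible, since any $\lambda$ larger than all entries of $\rho^*$ gives $x(\lambda)=0$ with $R^\delta(0)=R(l)\le B$ (assuming Problem~\eqref{eq:constrained-prob} is feasible at all). \emph{Parametric optimality of the path.} By Lemma~\ref{lemma:prob-equiv} and the thresholding construction, for every $\lambda\ge 0$ the point $x(\lambda)$ minimizes $H^\delta(\cdot)+\lambda R^\delta(\cdot)$ over the lattice; a one-line weak-duality argument (using $\lambda\ge0$) upgrades this to: $x(\lambda)$ minimizes $H^\delta$ over $\{w : R^\delta(w)\le R^\delta(x(\lambda))\}$. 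Since $x(\lambda)$ is nonincreasing in $\lambda$ and $R$ is increasing, $\lambda\mapsto R^\delta(x(\lambda))$ is a nonincreasing step function, so the feasible set of Problem~\eqref{eq:optlambda} is an interval $[\lambda^*,\infty)$; as $H$ is decreasing, the minimum in \eqref{eq:optlambda} is attained at this left endpoint, so the returned $x(\lambda^*)$ is the largest $R$-feasible point on the path.

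The crux is the step structure of the path. Within each coordinate $i$ the entries $\rho^*_i(1)\ge\cdots\ge\rho^*_i(k_i-1)$ are already sorted, and the distinctness hypothesis makes them strictly decreasing within each coordinate and pairwise distinct across coordinates; hence as $\lambda$ decreases through the breakpoints $\{\rho^*_i(j)\}$, the thresholded point $x(\lambda)$ increases by exactly one unit in exactly one coordinate at each breakpoint. Therefore, unless $x(\lambda^*)$ is already path-maximal — equivalently $\lambda^*=0$ and the whole path is $R$-feasible, in which case $x(\lambda^*)=x(0)$ is the unconstrained lattice minimizer of $H^\delta$ (the $\lambda=0$ case) and we finish even more directly — the next point $x^-$ on the path satisfies $x^-=x(\lambda^*)+e_{i_0}$ for a single coordinate $i_0$, equals $x(\lambda')$ for some $\lambda'\ge0$, and is $R$-infeasible, $R^\delta(x^-)>B$ (otherwise $\lambda^*$ would not be minimal).

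Finally I would chain the estimates. Let $\hat x$ be a continuous optimum of Problem~\eqref{eq:constrained-prob} and let $z$ be $\hat x$ rounded down coordinatewise onto the grid, so $A(z)\le\hat x$: then $R^\delta(z)=R(A(z))\le R(\hat x)\le B$ (so $z$ is lattice-feasible), and $\|A(z)-\hat x\|_\infty\le\delta$ since $A$ is $\delta$-fine. From $R^\delta(z)\le B<R^\delta(x^-)$ and the parametric optimality of $x^-=x(\lambda')$, we get $H^\delta(x^-)\le H^\delta(z)$. Since $A(x(\lambda^*))$ and $A(x^-)=A(x(\lambda^*)+e_{i_0})$ differ in a single coordinate by at most $\delta$, the $\ell_\infty$-Lipschitz bound gives $H(x')=H^\delta(x(\lambda^*))\le H^\delta(x^-)+G\delta$, and likewise $H^\delta(z)=H(A(z))\le H(\hat x)+G\delta$. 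Combining, $H(x')\le H^\delta(x^-)+G\delta\le H^\delta(z)+G\delta\le H(\hat x)+2G\delta=\min_{x\in\boxset(l,u),\,R(x)\le B}H(x)+2G\delta$, as claimed (in the path-maximal case the same chain holds without the first $G\delta$).

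The main obstacle is the third paragraph: showing that distinctness of the entries of $\rho^*$ forces consecutive points on the solution path to differ by exactly one unit in exactly one coordinate. This is precisely what lets the first $R$-infeasible point $x^-$ dominate (in $H^\delta$) every feasible lattice point via parametric optimality, while lying only one $\delta$-fine grid step — hence only $G\delta$ in $H$-value — away from the returned point $x(\lambda^*)$, yielding the $2G\delta$ bound with no dependence on $n$ or the number of grid levels. The remaining work is careful tracking of the monotonicity directions ($H$ decreasing, $R$ increasing) and elementary Lipschitz estimates.
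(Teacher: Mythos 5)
Your proof is correct and follows essentially the same route as the paper's: a feasible lattice point within $G\delta$ of the continuous optimum, a weak-duality/parametric-optimality argument showing the first infeasible path point lower-bounds every feasible lattice point, and the distinctness hypothesis forcing that point to lie one grid step from $x(\lambda^*)$, giving the second $G\delta$. The only (harmless) repackaging is that you compare directly to the rounded-down continuous optimum $z$ rather than introducing the discrete optimum $x_d^*$ as an explicit intermediate, and your ``next point $x^-$'' plays exactly the role of the paper's $x(\lambda_+)$.
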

Theorem \ref{thm:constrained-opt} implies an algorithm for solving
Problem~\eqref{eq:constrained-prob} to $\ep$-optimality: (1) set $\delta =
\ep/G$, (2) compute $\rho^*$ which solves Problem~\eqref{eq:rho-prob}, (3)
find the optimal thresholding of $\rho^*$ by determining the
smallest $\lambda^*$ for which $R^\delta(x(\lambda^*)) \leq B$, and (4)  map
$x(\lambda^*)$ back into continuous space via the interpolation mapping $A$. 

\paragraph{Optimality Bounds.}
\label{rem:soln-dependent-bounds}
    Theorem~\ref{thm:constrained-opt} is proved by comparing $x'$ and $x^*$
    to the optimal solution on the discretized mesh 
    \[
        x_d^* \in \argmin_{x \in \prod_{i=1}^n [k_i] : R^\delta(x) \leq B} H^\delta(x).
    \]
    Beyond the theoretical guarantee of Theorem~\ref{thm:constrained-opt},
    for any problem instance and candidate solution $x'$, we can compute
    a bound on the gap between $H(x')$ and $H^\delta(x_d^*)$. The following two bounds are proved in the appendix:
    \vspace{-3pt}
    \begin{enumerate}\setlength{\itemsep}{1pt}
        \item We can generate a discrete point $x(\lambda_+)$ satisfying
        \begin{equation*}
            H(x') \leq [ H(x') - H^\delta(x(\lambda_+))] + H^\delta(x_d^*).
        \end{equation*}
        \item The Lagrangian yields the bound
        \[
            H(x') \leq \lambda^* (B - R(x')) + H^\delta(x_d^*).
        \]
    \end{enumerate}

\paragraph{Improvements.}
\label{rem:rho-distinct}
The requirement in Theorem~\ref{thm:constrained-opt} that the elements of $\rho^*$ be distinct may seem somewhat restrictive, but as long as $\rho^*$ has distinct elements in the neighborhood of our particular $\lambda^*$, this bound still holds. We see in Section~\ref{sec:synthetic-optimality} that in practice,
$\rho^*$ almost always has distinct elements in the regime we care about, and the bounds of Remark~\ref{rem:soln-dependent-bounds} are very good.

If $H$ is DR-submodular and $R$ is affine in each coordinate, then Problem~\eqref{eq:regularized-prob-discrete} can be represented more compactly via the reduction of \citet{ene_reduction_2016}, and hence problem~\eqref{eq:constrained-prob} can be solved more efficiently. In particular, the influence function $\Is(y;x)$ is DR-submodular in $x$ when for each $s$, $y(s)=0$ or $y(s) \geq 1$.

\subsection{Application to Robust Budget Allocation}
The above algorithm directly applies to Robust Allocation with the uncertainty sets in Section~\ref{subsec:robust}. The ellipsoidal uncertainty set $\Xs^Q$ corresponds to the constraint that $\sum_{(s,t) \in E} R_{st}(x_{st})\leq \gamma$ with $R_{st}(x) = (x_{st} - \hat x_{st})^2 \sigma_{st}^{-2}$, and $x \in \boxset(0,1)$. By the monotonicity of $\Is(x,y)$, there is never incentive to reduce any
$x_{st}$ below $\hat x_{st}$, so we can replace
$\boxset(0,1)$ with $\boxset(\hat x,1)$. On this interval, each $R_{st}$ is
strictly increasing, and Theorem~\ref{thm:constrained-opt} applies.

For D-norm sets, we have $R_{st}(x_{st}) = (x_{st} - \hat x_{st}) / (u_{st} - \hat x_{st})$. Since each $R_{st}$ is monotone, Theorem~\ref{thm:constrained-opt} applies.

\paragraph{Runtime and Alternatives.}
Since the core algorithm is Frank-Wolfe, it is straightforward to show that Problem~\eqref{eq:rho-prob} can be solved to $\varepsilon$-suboptimality in time $O(\varepsilon^{-1} n^2\delta^{-3} \alpha^{-1} |T|^2 \log{n\delta^{-1}})$, where $\alpha$ is the minimum derivative of the functions $R_i$.
If $\rho^*$ has distinct elements separated by $\eta$, then choosing $\varepsilon = \eta^2 \alpha \delta / 8$ results in an exact solution to~\eqref{eq:regularized-prob-discrete} in time $O(\eta^{-2} n^2\delta^{-4} \alpha^{-2} |T|^2 \log{n\delta^{-1}})$.

Noting that $H^\delta + \lambda R^\delta$ is submodular for all $\lambda$, one could instead perform binary search over $\lambda$, each time converting the objective into a submodular \emph{set} function via Birkhoff's theorem and solving submodular minimization e.g.\ via one of the recent fast methods 
\citep{chakrabarty2017subquadratic,lee_faster_2015}. However, we are not aware of a practical implementation of the algorithm in~\citep{lee_faster_2015}. The algorithm in~\citep{chakrabarty2017subquadratic} yields a solution in expectation. This approach also requires care in the precision of the search over $\lambda$, whereas our approach searches directly over the $O(n\delta^{-1})$ elements of $\rho^*$.


\section{Experiments}
\label{sec:expts}
We evaluate our Robust Budget Allocation algorithm on both synthetic test data and a real-world bidding dataset from Yahoo! Webscope \cite{yahoo} to demonstrate that our method yields real improvements. For all experiments, we used Algorithm \ref{alg:subgrad} as the outer loop. For the inner submodular minimization step, we implemented the pairwise Frank-Wolfe algorithm of \cite{lacoste_global_2015}. In all cases, the feasible set of budgets $\Ys$ is $\{y \in \reals^S_+ : \sum_{s\in S} y(s) \leq C\}$ where the specific budget $C$ depends on the experiment. Our code is available at \url{git.io/vHXkO}.

\subsection{Synthetic}
On the synthetic data, we probe two questions: (1) how often does the distinctness condition of Theorem~\ref{thm:constrained-opt} hold, so that we are guaranteed an optimal solution; and (2) what is the gain of using a robust versus non-robust solution in an adversarial setting?
%
For both settings, 
we set $\abs S = 6$ and $\abs T = 2$ and discretize with $\delta=0.001$. We generated true probabilties $p_{st}$, created Beta posteriors, and built both Ellipsoidal uncertainty sets $\Xs^Q(\gamma)$ and D-norm sets $\Xs^D(\gamma)$.

\subsubsection{Optimality}
\label{sec:synthetic-optimality}
Theorem~\ref{thm:constrained-opt} and Remark~\ref{rem:rho-distinct} demand that the values $\rho^*_i(j)$ be distinct at our chosen Lagrange multiplier $\lambda^*$ and, under this condition, guarantee optimality. We illustrate this in four examples: for Ellipsoidal or a D-norm uncertainty set, and a total influence budget $C \in \{0.4, 4\}$.
Figure~\ref{fig:yahoo-convergence} shows all elements of $\rho^*$ in sorted order, as well as a horizontal line indicating our Lagrange multiplier $\lambda^*$ which serves as a threshold. Despite some plateaus, the entries $\rho^*_i(j)$ are distinct in most regimes, in particular around $\lambda^*$, the regime that is needed for our results.  Moreover, in practice (on the Yahoo data) we observe later in Figure~\ref{fig:yahoo-convergence} that both solution-dependent bounds from Remark~\ref{rem:soln-dependent-bounds} are very good, and all solutions are optimal within a very small gap.

\begin{figure}
    \begin{center}
        \includegraphics[width=5.42in]{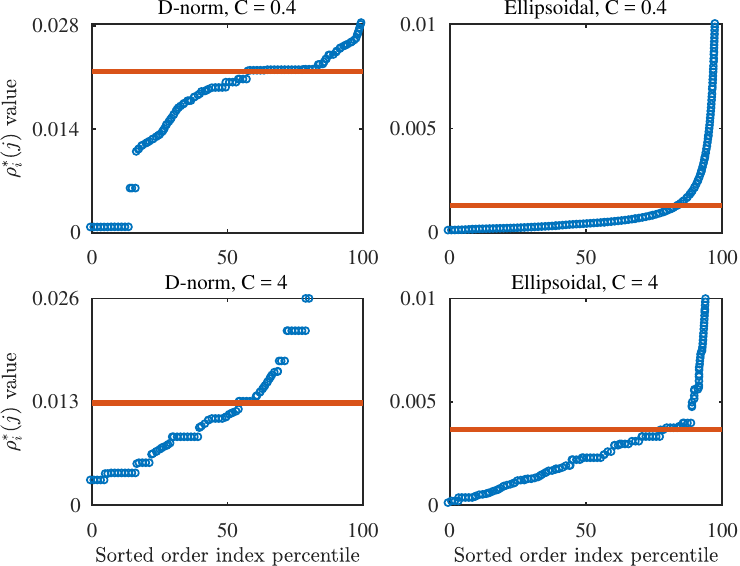}
    \end{center}
        \caption{Visualization of the sorted values of $\rho^*_i(j)$ (blue dots) with comparison to the particular Lagrange multiplier $\lambda^*$ (orange line). In most regimes there are no duplicate values, so that Theorem~\ref{thm:constrained-opt} applies. The theorem only needs distinctness at $\lambda^*$.}
        \label{fig:rho-plots}

\end{figure}

\subsubsection{Robustness and Quality}
Next, we probe the effect of a robust versus non-robust solution for different uncertainty sets and budgets $\gamma$ of the adversary. We compare our robust solution with using a point estimate for $x$, i.e., $y_\mathrm{nom} \in \argmax_{y\in\Ys} \Is(y; \hat x)$, treating estimates as ground truth, and the stochastic solution $y_\mathrm{expect} \in \argmax_{y\in\Ys} \E[\Is(y;X)]$ as per Section~\ref{sec:stoch-opt}. These two optimization problems were solved via standard first-order methods using TFOCS \cite{becker_templates_2011}.

Figure \ref{fig:synthetic-robust-influence} demonstrates that indeed, the alternative budgets are sensitive to the adversary and the robustly-chosen budget $y_\mathrm{robust}$ performs better, even in cases where the other budgets achieve zero influence. When the total budget $C$ is large, $y_\mathrm{expect}$ performs nearly as well as $y_\mathrm{robust}$, but when resources are scarce ($C$ is small) and the actual choice seems to matter more, $y_\mathrm{robust}$ performs far better.

\begin{figure}
\begin{center}
    \includegraphics[width=5.42in]{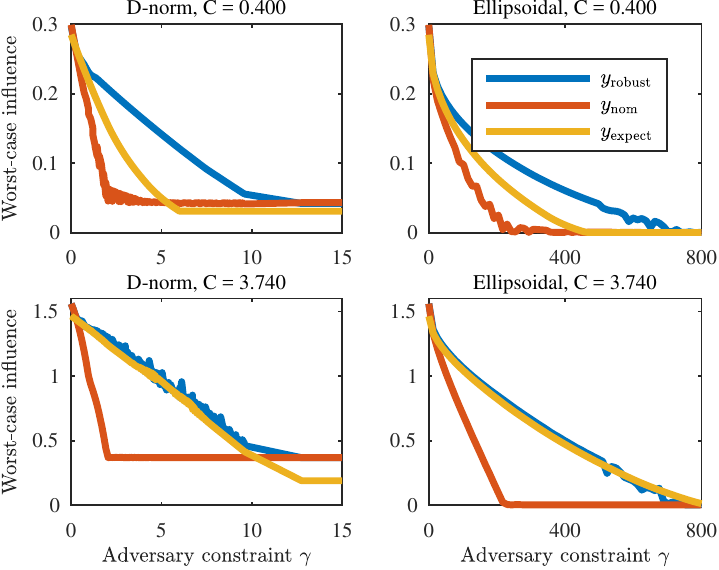}
\end{center}
\caption{Comparison of worst-case expected influences for D-norm uncertainty sets $\Xs^D(\gamma)$ (left)
  and ellipsoidal uncertainty sets $\Xs^Q(\gamma)$ (right), for different total budget bounds $C$.
  For any particular adversary budget $\gamma$, we compare $\min_{x\in\Xs(\gamma)} \Is(y;x)$ for each candidate allocation $y$.}
\label{fig:synthetic-robust-influence}
\end{figure}

\subsection{Yahoo! data}
To evaluate our method on real-world data, we formulate
a Budget Allocation instance on advertiser bidding data from
Yahoo! Webscope \citep{yahoo}. This dataset logs bids on 1000 different
phrases by advertising accounts. We map the phrases to channels $S$ and 
the accounts to customers $T$, with an edge between $s$ and $t$ if a
corresponding bid was made. For each pair $(s,t)$, we draw the associated
transmission probability $p_{st}$ uniformly from $[0,0.4]$. We bias these
towards zero because we expect people not to be easily influenced by
advertising in the real world. We then generate an estimate $\hat p_{st}$ and
build up a posterior by generating $n_{st}$ samples from
$\Ber(p_{st})$, where $n_{st}$ is the number of bids between
$s$ and $t$ in the dataset.

This transformation yields a bipartite graph with $\abs S = 1000$, $\abs T = 10475$, and 
more than 50,000 edges that we use for Budget Allocation.
In our experiments, the typical gap between
the naive $y_\mathrm{nom}$ and robust $y_\mathrm{robust}$ was 100-500 expected influenced people.
We plot convergence of the outer loop in Figure~\ref{fig:yahoo-convergence}, where we observe fast convergence of both
primal influence value and the dual bound.

\begin{figure}
\begin{center}
    \includegraphics[width=6in]{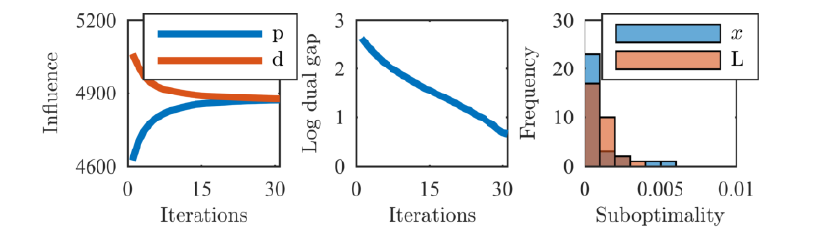}
\end{center}
\caption{Convergence properties of our algorithm on real data. In the first plot, `p' and `d' refer to primal and dual values, with dual gap shown on the second plot. The third plot demonstrates that the problem-dependent suboptimality bounds of Remark~\ref{rem:soln-dependent-bounds} ($x$ for $x(\lambda_+)$ and L for Lagrangian) are very small (good) for all inner iterations of this run.}
\label{fig:yahoo-convergence}
\end{figure}


\subsection{Comparison to first-order methods}
Given the success of first-order methods on nonconvex problems in practice, it is natural to compare these to our method for finding the worst-case vector $x$. On one of our Yahoo problem instances with D-norm uncertainty set, we compared our submodular minimization scheme to Frank-Wolfe with fixed stepsize as in \citep{lacoste-julien_convergence_2016}, implementing the linear oracle using MOSEK \citep{mosek}. Interestingly, from various initializations, Frank-Wolfe finds an optimal solution, as verified by comparing to the guaranteed solution of our algorithm. Note that, due to non-convexity, there are no formal guarantees for Frank-Wolfe to be optimal here, motivating the question of global convergence properties of Frank-Wolfe in the presence of submodularity.


It is important to note that there are many cases where first-order methods are inefficient or do not apply to our setup. These methods require either a projection oracle (PO) onto or linear optimization oracle (LO) over the feasible set $\Xs$ defined by $\ell$, $u$ and $R(x)$. The D-norm set admits a LO via linear programming, but we are not aware of any efficient LO for Ellipsoidal uncertainty, nor PO for either set, that does not require quadratic programming. Even more, our algorithm applies for nonconvex functions $R(x)$ which induce nonconvex feasible sets $\Xs$. Such nonconvex sets may not even admit a unique projection, while  our algorithm achieves provable solutions.


\begin{figure}
\begin{center}
    \includegraphics[width=3.17in]{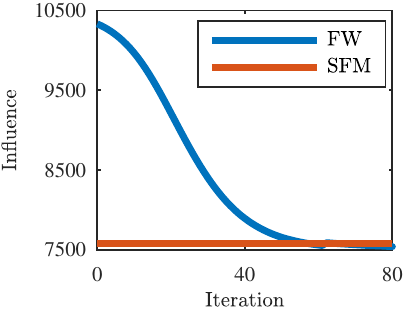}
\end{center}
\caption{Convergence properties of Frank-Wolfe (FW), versus the optimal value attained with our scheme (SFM).}
\label{fig:fw-compare}
\end{figure}



\section{Conclusion}
We address the issue of uncertain parameters (or, model mis-specification) in Budget Allocation or Bipartite Influence Maximization \citep{alon_optimizing_2012} from a robust optimization perspective. The resulting \emph{Robust Budget Allocation} is a nonconvex-concave saddle point problem. Although the inner optimization problem is nonconvex, we show how continuous submodularity can be leveraged to solve the problem to arbitrary accuracy $\ep$, as can be verified with the proposed bounds on the duality gap. In particular, our approach extends continuous submodular minimization methods \citep{bach_submodular_2015} to more general constraint sets, introducing a mechanism to solve a new class of constrained nonconvex optimization problems.
We confirm on synthetic and real data that our method finds high-quality solutions that are robust to parameters varying arbitrarily in an uncertainty set, and scales up to graphs with over 50,000 edges.

There are many compelling directions for further study. The uncertainty sets we use are standard in the robust optimization literature, but have not been applied to e.g. Robust Influence Maximization; it would be interesting to generalize our ideas to general graphs. Finally, despite the inherent nonconvexity of our problem, first-order methods are often able to find a globally optimal solution. Explaining this phenomenon requires further study of the geometry of constrained monotone submodular minimization.

\subsection*{Acknowledgements}
We thank the anonymous reviewers for their helpful suggestions. We also thank MIT Supercloud and the Lincoln Laboratory Supercomputing Center for providing computational resources.
This research was conducted with Government support under and awarded by DoD, Air Force Office of Scientific Research, National Defense Science and Engineering Graduate (NDSEG) Fellowship, 32 CFR 168a, and also supported by NSF CAREER award 1553284.

\bibliography{ref_short}
\bibliographystyle{icml2017nourl}

\appendix
\section{Worst-Case Approximation Ratio versus True Worst-Case}
\label{sec:appendix-ratio}
Consider the function $f(x;\theta)$ defined on $\{0,1\} \times \{0,1\}$,
with values given by: 
\begin{equation}
    f(x;0) = \begin{cases}
        1 & x=0 \\
        0.6 & x=1,
    \end{cases} 
    \quad f(x;1) = \begin{cases}
        1 & x=0 \\
        2 & x=1.
    \end{cases}
\end{equation}
We wish to choose $x$ to maximize $f(x;\theta)$ robustly with respect to
adversarial choices of $\theta$.
If $\theta$ were fixed, we could directly choose $x_\theta^*$ to maximize $f(x;\theta)$. In particular, $x^*_0 = 0$ and $x^*_1 = 1$.
Of course, we want to deal with worst-case $\theta$. One option is to maximize the worst-case approximation ratio:
\begin{equation}
    \max_x \min_\theta \frac{f(x;\theta)}{f(x^*_\theta;\theta)}.
\end{equation}
One can verify that the best $x$ according to this criterion is $x=1$, with worst-case approximation ratio 0.6 and worst-case function value 0.6.
In this paper, we optimize the worst-case of the actual function value:
\begin{equation}
    \max_x \min_\theta f(x;\theta).
\end{equation}
This criterion will select $x=0$, which has a worse worst-case approximation
ratio of 0.5, but actually guarantees a function value of 1, significantly
better than the 0.6 achieved by the other formulation of robustness.


\section{DR-submodularity and $L^\natural$-convexity}
\label{sec:appendix-dr}
A function is $L^\natural$-convex if it satisfies a discrete version of midpoint convexity, i.e. for all $x,y$ it holds that
\begin{equation}
f(x) + f(y) \geq f\left(\left\lceil \frac{x+y}{2}\right\rceil\right) + f\left(\left\lfloor \frac{x+y}{2}\right\rfloor \right).
\end{equation}

\begin{remark}
    An $L^\natural$-convex function need not be DR-submodular, and vice-versa.
    Hence algorithms for optimizing one type may not apply for the other.
\end{remark}
\begin{proof}
Consider $f_1(x_1,x_2) = -x_1^2 - 2x_1 x_2$ and $f_2(x_1,x_2) = x_1^2 + x_2^2$, both defined on $\{0,1,2\} \times \{0,1,2\}$. The function $f_1$ is DR-submodular but violates discrete midpoint convexity for the pair of points $(0,0)$ and $(2,2)$, while $f_2$ is $L^\natural$-convex but does not have diminishing returns in either dimension.
\end{proof}

Intuitively-speaking, $L^\natural$-convex functions look like discretizations of convex functions. The continuous objective function $\Is(x,y)$ we consider need not be convex, hence its discretization need not be $L^\natural$-convex, and we cannot use those tools. However, in some regimes (namely if each $y(s) \in \{0\} \cup [1,\infty)$), it happens that $\Is(x,y)$ is DR-submodular in $x$.

\section{Constrained Continuous Submodular Function Minimization}
\matt{todo: modify notation of $\rho_{i x_i}$ vs $\rho_i(x_i)$ to be consistent with the earlier section}
Define $\reals_\downarrow^n$ to be the set of vectors $\rho$ in $\reals^n$
which are monotone nonincreasing, i.e. $\rho(1) \geq \rho(2) \geq \dots \geq
\rho(n)$. As in the main text, define $[k] = \{0,1,\dots,k-1\}$.
One of the key results from \cite{bach_submodular_2015} is that an arbitrary submodular function
$H(x)$ defined on $\prod_{i=1}^n [k_i]$ can be extended to a particular convex function $h_\downarrow(\rho)$ so that
\begin{equation}
    \begin{array}{ll}
        \text{minimize} & H(x) \\
        \text{s.t.} & x \in \prod_{i=1}^n [k_i]
    \end{array}
    \Leftrightarrow
    \begin{array}{ll}
        \text{minimize} & h_\downarrow(\rho) \\
        \text{s.t.} & \rho \in \prod_{i=1}^n \reals_\downarrow^{k_i - 1}.
    \end{array}
\end{equation}
Moreover, Theorem 4 from \cite{bach_submodular_2015} states that, if $a_{i y_i}$ are strictly convex functions
for all $i=1,\dots,n$ and each $y_i \in [k_i]$, then the two problems
\begin{equation}
    \label{eq:submod-plus-regularizer}
    \begin{array}{ll}
        \text{minimize} & H(x) + \sum_{i=1}^n \sum_{y_i=1}^{x_i} a^\prime_{i y_i} (\lambda) \\
        \text{s.t.} & x \in \prod_{i=1}^n [k_i].
    \end{array}
\end{equation}
and
\begin{equation}
    \label{eq:rho-plus-convex}
    \begin{array}{ll}
        \text{minimize} & h_\downarrow(\rho) + \sum_{i=1}^n \sum_{x_i=1}^{k_i-1} a_{i x_i}[\rho_i(x_i)] \\
        \text{s.t.} & \rho \in \prod_{i=1}^n \reals_\downarrow^{k_i - 1}
    \end{array}
\end{equation}
are equivalent. In particular, one recovers a solution to Problem~\eqref{eq:submod-plus-regularizer} for any $\lambda$ just as
alluded to in Lemma~\ref{lemma:prob-equiv}: 
find $\rho^*$ which solves Problem~\eqref{eq:rho-plus-convex} and, for each component $i$, choose $x_i$ to be
the maximal value for which $\rho^*_i(x_i) \geq \lambda$.

\label{sec:appendix-ccsfm}
\subsection{Proof of Lemma~\ref{lemma:prob-equiv}}
\begin{proof}
The discretized form of the regularizer $R^\delta$ is also separable and can be
written $R^\delta(x) = \sum_{i=1}^n R^\delta_i(x)$.
For each $i=1,\dots,n$ and each $y_i \in [k_i]$ with $y_i \geq 1$, define $a_{i
y_i}(t) = \frac12 t^2 \cdot [R^\delta_i(y_i) - R^\delta_i(y_i - 1)]$, so that
$a^\prime_{i y_i}(t) = t \cdot [R^\delta_i(y_i) - R^\delta_i(y_i - 1)]$. Since we
assumed $R(x)$ is strictly increasing, the coefficient of $t^2$ in each $a_{i
y_i}(t)$ is strictly positive, so that each $a_{i y_i}(t)$ is strictly convex.
Then,
\begin{align}
    \lambda R^\delta_i(x_i) &= \lambda \cdot \left[ R^\delta_i(0) + \sum_{y_i=1}^{x_i} \left( R^\delta_i(y_i) - R^\delta_i(y_i - 1)\right)\right] \\
&= \lambda R^\delta_i(0) + \sum_{y_i=1}^{x_i} a^\prime_{i y_i}(\lambda),
\end{align}
so that the discretized version of the minimization problem can be written as
\begin{equation}
    \begin{array}{ll}
        \text{minimize} & H^\delta(x) + \lambda R^\delta(0) + \sum_{i=1}^n \sum_{y_i=1}^{x_i} a^\prime_{i y_i} (\lambda) \\
        \text{s.t.} & x \in \prod_{i=1}^n [k_i].
    \end{array}
\end{equation}
Since the term $R^\delta(0)$ does not depend on the variable $x$, this minimization is equivalent to
\begin{equation}
    \begin{array}{ll}
        \text{minimize} & H^\delta(x) + \sum_{i=1}^n \sum_{y_i=1}^{x_i} a^\prime_{i y_i} (\lambda) \\
        \text{s.t.} & x \in \prod_{i=1}^n [k_i].
    \end{array}
\end{equation}
This problem is in the precise form where we can apply the preceding equivalence result between
Problems~\eqref{eq:submod-plus-regularizer} and \eqref{eq:rho-plus-convex}, so we are done.
\end{proof}

\subsection{Proof of Theorem~\ref{thm:constrained-opt}}
\begin{proof}
The general idea of this proof is to first show that the integer-valued
point $x_d^*$ which 
solves
\begin{equation*}
    x_d^* \in \argmin_{x \in \prod_{i=1}^n [k_i] : R^\delta(x) \leq B} H^\delta(x)
\end{equation*}
is also nearly a minimizer of the continuous version of the problem, due
to the fineness of the discretization. Then, we show that the solutions
traced out by $x(\lambda)$ get very close to $x_d^*$. These two results
are simply combined via the triangle inequality.

\subsubsection{Continuous and Discrete Problems}
We begin by proving that 
\begin{equation}
    \label{eq:ctns-discrete-ineq}
    H^\delta(x_d^*)
    \leq G\delta + \min_{x \in \Xs : R(x) \leq B} H(x).
\end{equation}
Consider $x^* \in \arg\min_{x \in \Xs : R(x) \leq B} H(x)$. If $x^*$
corresponds to an integral point in the discretized domain, then $H(x^*) =
H^\delta(x_d^*)$ and we are done. Else, $x^*$ has at least one
non-integral coordinate. By rounding coordinatewise, we can construct
a set $X = \{x_1, \dots, x_m\} \subseteq \prod_{i=1}^n [k_i]$ so that
$x^* \in \conv(\{A(x_1),\dots,A(x_m)\}$. By monotonicity, there must be some $x_i \in X$
with $R^\delta(x_i) \leq B$, i.e. $A(x_i)$ is feasible for the original
continuous problem. By construction, since the discretization given by $A$ is
$\delta$-fine, we must have $\lVert x^* - A(x_i) \rVert_\infty \leq \delta$.
Applying the Lipschitz property of $H$ and the optimality of $x^*$, we have
\[
    G\delta \geq H(A(x_i)) - H(x^*) = H^\delta(x_i) - H(x^*) \geq H^\delta(x_d^*) - H(x^*),
\]
from which~\eqref{eq:ctns-discrete-ineq} follows.

\subsubsection{Discrete and Parameterized Discrete Problems}
Define $\lambda_-$ and $\lambda_+$ by
\begin{align*}
    \lambda_- &\in \argmin_{\lambda \geq 0 : R^\delta(x(\lambda)) \leq B} H^\delta(x(\lambda)) \quad \text{and} \\
    \lambda_+ &\in \argmax_{\lambda \geq 0 : R^\delta(x(\lambda)) \geq B} H^\delta(x(\lambda)).
\end{align*}
The next step in proving our suboptimality bound is to prove that 
\begin{equation}
    H^\delta(x(\lambda_+)) \leq H^\delta(x_d^*) \leq H^\delta(x(\lambda_-)),
\end{equation}
from which it will follow that 
\begin{equation*}
    H^\delta(x(\lambda_-)) \leq G\delta + H^\delta(x_d^*).
\end{equation*}

We begin by stating the min-max inequality, i.e. weak duality:
\begin{align}
    \min_{x \in \prod_{i=1}^n [k_i] : R^\delta(x) \leq B} H^\delta(x)
    &= \min_{x \in \prod_{i=1}^n [k_i]} \max_{\lambda \geq 0} \left\{ H^\delta(x) + \lambda (R^\delta(x) - B) \right\} \\
    &\geq \max_{\lambda \geq 0} \min_{x \in \prod_{i=1}^n [k_i]} \left\{ H^\delta(x) + \lambda (R^\delta(x) - B) \right\} \\
    &= \max_{\lambda \geq 0} \left\{ H^\delta(x(\lambda)) + \lambda (R^\delta(x(\lambda)) - B) \right\} \\
    &\geq \max_{\lambda \geq 0 : R^\delta(x(\lambda)) \geq B} \left\{ H^\delta(x(\lambda)) + \lambda (R^\delta(x(\lambda)) - B) \right\} \\
    &\geq \max_{\lambda \geq 0 : R^\delta(x(\lambda)) \geq B} H^\delta(x(\lambda)) \\
    &= H^\delta(x(\lambda_+)).
\end{align}
We can also bound the optimal value of $H^\delta(x_d^*)$ from the other
side:
\begin{equation}
    H^\delta(x_d^*)
    = \min_{x \in \prod_{i=1}^n [k_i] : R^\delta(x) \leq B} H^\delta(x)
    \leq \min_{\lambda \geq 0 : R^\delta(x(\lambda)) \leq B} H^\delta(x)
    = H^\delta(x(\lambda_-)) 
\end{equation}
because the set of $x(\lambda)$ parameterized by $\lambda$ is a subset of the
full set $\{ x \in \prod_{i=1}^n [k_i] : R^\delta(x) \leq B \}$.

We have now bounded the optimal value of $H^\delta(x_d^*)$ on either side
by optimization problems where we seek an optimal $\lambda \geq 0$ for the
parameterization $x(\lambda)$:
\begin{equation}
    H^\delta(x(\lambda_+))
    \leq
    H^\delta(x_d^*)
    \leq
     H^\delta(x(\lambda_-)).
\end{equation}
Recall that $x(\lambda)$ comes from thresholding the values of $\rho^*$
by $\lambda$, and that we assume that the elements of $\rho^*$ are
unique. Hence, as we increase $\lambda$, the components of $x$ decrease by
1 each time. Combining this with the strict monotonicity of $R$, we see that
$\lVert x(\lambda_+) - x(\lambda_-) \rVert_\infty \leq 1$. By the Lipschitz
properties of $H^\delta$, it follows that 
$\abs{H^\delta(x(\lambda_+)) - H^\delta(x(\lambda_-))} \leq G\delta$.
Since $H^\delta(x_d^*)$ lies in the interval between $H^\delta(x(\lambda_+))$ and $H^\delta(x(\lambda_-))$, it follows that 
$\abs{H^\delta(x_d^*) - H^\delta(x(\lambda_-))} \leq G\delta$.
\end{proof}

%
\subsection{Proof of Remark~\ref{rem:soln-dependent-bounds}}
Define $\lambda^* = \lambda_-$ as in the previous section, so that $x' = A(x(\lambda^*)$. The $x(\lambda_+)$ bound is a simple consequence from the above result that
\[
    H^\delta(x(\lambda_+)) \leq H^\delta(x_d^*) \leq H^\delta(x(\lambda_-)) = H(x').
\]

As for the Lagrangian bound, since $x(\lambda^*)$ is a minimizer for the regularized function $H^\delta(x) + \lambda^* (R^\delta(x) - B)$, it follows that
\begin{equation}
    H^\delta(x(\lambda^*)) + \lambda^* (R^\delta(x(\lambda^*)) - B) \leq H^\delta(x_d^*) + \lambda^* (R^\delta(x_d^*) - B).
\end{equation}
Rearranging, and observing that $R^\delta(x_d^*) \leq B$ because $x_d^*$ is feasible, it holds that
\begin{equation}
    H(x') = H^\delta(x(\lambda^*)) \leq H^\delta(x_d^*) + \lambda^* (R^\delta(x_d^*) - R^\delta(x(\lambda^*)))
    \leq H^\delta(x_d^*) + \lambda^* (B - R(x')).
\end{equation}

One can also combine either of these bounds with the result from the proof of Theorem~\ref{thm:constrained-opt} that $H^\delta(x_d^*) \leq G\delta + H(x^*)$ yielding e.g.
\[
    H(x') \leq G\delta + \lambda^*(B - R(x')) + H^\delta(x_d^*).
\]

\subsection{Solving the Optimization Problem}
Now that we have proven equivalence results between the constrained problem we want to solve
and the convex problem~\eqref{eq:rho-plus-convex}, we need to actually solve the convex problem.
At the beginning of Section 5.2 in \cite{bach_submodular_2015}, it is stated
that this surrogate problem can optimized via the Frank-Wolfe method and its
variants, but only the the version of Problem~\eqref{eq:rho-plus-convex} without the extra functions $a_{i x_i}$ is elaborated upon. Here we detail how Frank-Wolfe algorithms can be used to solve the
more general parametric regularized problem. Our aim is to spell out very
clearly the applicability of Frank-Wolfe to this problem, for the ease of
future practitioners.

\citet{bach_submodular_2015} notes that by duality, Problem~\eqref{eq:rho-plus-convex} is equivalent to:
\begin{align*}
    \min_{\rho \in \prod_{i=1}^n \reals_\downarrow^{k_i - 1}}
    h_\downarrow(\rho) - H(0) + \sum_{i=1}^n \sum_{x_i=1}^{k_i-1} a_{i x_i}[\rho_i(x_i)] 
    &= \min_{\rho \in \prod_{i=1}^n \reals_\downarrow^{k_i - 1}}
    \max_{w \in B(H)} \langle \rho, w \rangle + \sum_{i=1}^n \sum_{x_i=1}^{k_i-1} a_{i x_i}[\rho_i(x_i)] \\
    &= 
    \max_{w \in B(H)} \left\{
    \min_{\rho \in \prod_{i=1}^n \reals_\downarrow^{k_i - 1}}
\langle \rho, w \rangle + \sum_{i=1}^n \sum_{x_i=1}^{k_i-1} a_{i x_i}[\rho_i(x_i)] \right\} \\
    &:= \max_{w \in B(H)} f(w).
\end{align*}
Here, the base polytope $B(H)$ happens to be the convex hull of all vectors $w$ 
which could be output by the greedy algorithm in \cite{bach_submodular_2015}.

It is the dual problem, where we maximize over $w$, which is amenable to
Frank-Wolfe. For Frank-Wolfe methods, we need two oracles: an oracle which,
given $w$, returns $\nabla f(w)$; and an oracle which, given $\nabla f(w)$,
produces a point $s$ which solves the linear optimization problem $\max_{s \in
B(H)} \langle s, \nabla f(w) \rangle$.

Per \citet{bach_submodular_2015}, an optimizer of the linear
problem can be computed directly from the greedy algorithm. \matt{elaborate}
For the gradient oracle, recall that we can find a subgradient of $g(x) =
\min_y h(x,y)$ at the point $x_0$ by finding $y(x_0)$ which is optimal for the
inner problem, and then computing $\nabla_x h(x,y(x_0))$. Moreover, if such
$y(x_0)$ is the unique optimizer, then the resulting vector is indeed the
\emph{gradient} of $g(x)$ at $x_0$. Hence, in our case, it suffices to first find $\rho(w)$ which solves the inner problem, and then $\nabla f(w)$ is simply $\rho(w)$ because the inner function is linear in $w$. Since each function $a_{i x_i}$ is strictly convex, the minimizer $\rho(w)$ is unique, confirming that we indeed get a gradient of $f$, and that $f$ is differentiable. 

Of course, we still need to compute the minimizer $\rho(w)$. For a given $w$, we want to solve
\begin{align*}
    \min_{\rho \in \prod_{i=1}^n \reals_\downarrow^{k_i - 1}}
\langle \rho, w \rangle + \sum_{i=1}^n \sum_{x_i=1}^{k_i-1} a_{i x_i}[\rho_i(x_i)] 
\end{align*}
There are no constraints coupling the vectors $\rho_i$, and the objective is similarly separable, so we can independently solve $n$ problems of the form
\begin{align*}
    \min_{\rho \in \reals_\downarrow^{k - 1}}
\langle \rho, w \rangle + \sum_{j=1}^{k-1} a_{j}(\rho_j).
\end{align*}
Recall that each function $a_{i y_i}(t)$ takes the form $\frac12 t^2 r_{i y_i}
$ for some $r_{i y_i} > 0$. Let $D = \diag(r)$, the $(k-1)\times(k-1)$ matrix with diagonal entries $r_j$. Our problem can then be written as
\begin{align*}
    \min_{\rho \in \reals_\downarrow^{k - 1}}
\langle \rho, w \rangle + \frac12 \sum_{j=1}^{k-1} r_j \rho_j^2
    &=
    \min_{\rho \in \reals_\downarrow^{k - 1}}
    \langle \rho, w \rangle + \frac12 \langle D \rho, \; \rho \rangle \\
    &=
    \min_{\rho \in \reals_\downarrow^{k - 1}}
    \langle D^{1/2}\rho, \; D^{-1/2} w \rangle + \frac12 \langle D^{1/2} \rho, \; D^{1/2}\rho \rangle. 
\end{align*}
Completing the square, the above problem is equivalent to
\begin{align*}
    \min_{\rho \in \reals_\downarrow^{k - 1}}
    \lVert D^{1/2} \rho + D^{-1/2} w \rVert_2^2
    &= 
    \min_{\rho \in \reals_\downarrow^{k - 1}}
    \sum_{j=1}^{k-1} (r_j^{1/2} \rho_j + r_j^{-1/2} w_j)^2 \\
    &= 
    \min_{\rho \in \reals_\downarrow^{k - 1}}
    \sum_{j=1}^{k-1} r_j (\rho_j + r_j^{-1} w_j)^2.
\end{align*}
This last expression is precisely the problem which is called weighted
isotonic regression: we are fitting $\rho$ to $\diag(r^{-1}) w$, with weights
$r$, subject to a monotonicity constraint. Weighted isotonic regression is
solved efficiently via the Pool Adjacent Violators algorithm of
\cite{best_active_1990}.

\subsection{Runtime}
Frank-Wolfe returns an $\ep$-suboptimal solution in $O(\ep^{-1} D^2 L)$ iterations, where $D$ is the diameter of the feasible region, and $L$ is the Lipschitz constant for the gradient of the objective~\citep{jaggi_revisiting_2013}. Our optimization problem is $\max_{w\in B(H)} f(w)$ as defined in the previous section. Each $w \in B(H)$ has $O(n\delta^{-1})$ coordinates of the form $H^\delta(x+e_i)-H^\delta(x)$. Since $H^\delta$ is an expected influence in the range $[0,T]$, we can bound the magnitude of each coordinate of $w$ by $T$ and hence $D^2$ by $O(n\delta^{-1} T^2)$. If $\alpha$ is the minimum derivative of the functions $R_i$, then the smallest coefficient of the functions $a_{ix_i}(t)$ is bounded below by $\alpha\delta$. Hence the objective is the conjugate of an $\alpha\delta$-strongly convex function, and therefore has $\alpha^{-1}\delta^{-1}$-Lipschitz gradient.
Combining these, we arrive at the $O(\ep^{-1} n\delta^{-2} \alpha^{-1} T^2)$ iteration bound. The most expensive step in each iteration is computing the subgradient, which requires sorting the $O(n\delta^{-1})$ elements of $\rho$ in time $O(n\delta^{-1} \log{n\delta^{-1}} )$. Hence the total runtime of Frank-Wolfe is $O(\ep^{-1} n^2\delta^{-3} \alpha^{-1} T^2 \log{n\delta^{-1}})$. 

As specified in the main text, relating an approximate solution of~\eqref{eq:rho-prob} to a solution of~\eqref{eq:regularized-prob-discrete} is nontrivial. Assume $\rho^*$ has distinct elements separated by $\eta$, and chose $\ep$ to be less than $\eta^2 \alpha \delta / 8$. If $\rho$ is $\ep$-suboptimal, then by $\alpha\delta$-strong convexity we must have $\lVert \rho - \rho^* \rVert_2 < \eta/2$, and therefore $\lVert \rho - \rho^* \rVert_\infty < \eta/2$. Since the smallest consecutive gap between elements of $\rho^*$ is $\eta$, this implies that $\rho$ and $\rho^*$ have the same ordering, and therefore admit the same solution $x$ after thresholding. Accounting for this choice in $\ep$, we have an exact solution to~\eqref{eq:regularized-prob-discrete} in total runtime of $O(\eta^{-2} n^2\delta^{-4} \alpha^{-2} T^2 \log{n\delta^{-1}})$.

\section{Expectation and Variance of the Influence Function}
We wish to study the influence $\Is(X,y)$, its expectation and its variance
as a function of $y$. By definition, the influence function is given by
\begin{equation}
    \Is(X,y) = \sum_{t\in T} \left( 1 - \prod_{(s,t)\in E} X_{st}^{y(s)} \right).
\end{equation}
Before we prove the stated results, we will simplify the functions involved.

Maximizing $\Is(X,y)$ is equivalent to minimizing the function
\begin{equation}
    \sum_{t\in T}  \prod_{(s,t)\in E} X_{st}^{y(s)} 
\end{equation}
and vice-versa. The particular properties we are interested in, namely
convexity and submodularity, are preserved under sums. Moreover, expectation
is linear and variances add, so for our purposes we can focus on only one term
of the above sum. After reindexing in terms of $i=1,\dots,n$ instead of
$(s,t)\in E$, we are left studying functions of the form 
\begin{equation}
    f(y) = \prod_{i=1}^n X_i^{y_i}.
\end{equation}
If $f(y)$ is always convex (or supermodular), then $\Is(X,y)$ is always concave (submodular) in $y$, and similarly for their expectations and variances.

\paragraph{Expectation} 
By independence, 
\begin{equation}
    \E[f(y)] = \prod_{i=1}^n \E[X_i^{y_i}].
\end{equation}
Suppose that each $X_i \sim \betadist(\alpha_i,\beta_i)$, so that
\begin{align}
    \E[X_i^{y_i}] &= \frac{\Gamma(\alpha_i + \beta_i) \Gamma(\alpha_i + y_i)}{\Gamma(\alpha_i + \beta_i + y_i)} \\
    &= \frac{\Gamma(\alpha_i + \beta_i) \Gamma(\alpha_i + y_i)\Gamma(\beta_i)}{\Gamma(\alpha_i + \beta_i + y_i)\Gamma(\beta_i)} \\
    &= \frac{B(\alpha_i + y_i, \beta_i)}{B(\alpha_i, \beta_i)}.
\end{align}
Then,
\begin{equation}
    \E[f(y)] = \prod_{i=1}^n \frac{B(\alpha_i + y_i, \beta_i)}{B(\alpha_i, \beta_i)}
    \propto 
    \prod_{i=1}^n B(\alpha_i + y_i, \beta_i),
\end{equation}
where by $\propto$ we mean that the product of the denominators is a positive
constant, dependent on the problem data but independent of $y$. 

\paragraph{Variance}
The variance of $f(y)$ can be written as
\begin{align}
    \Var\left[\prod_{i=1}^n X_i^{y_i}\right]
    &= \E\left[\prod_{i=1}^n X_i^{2y_i}\right] - \E\left[\prod_{i=1}^n X_i^{y_i}\right]^2 \\
    &= \prod_{i=1}^n \E\left[X_i^{2y_i}\right] - \prod_{i=1}^n \E\left[X_i^{y_i}\right]^2 \\
    &= \prod_{i=1}^n \frac{B(\alpha_{i} + 2y_i, \beta_{i})}{B(\alpha_{i}, \beta_{i})}
    - \prod_{i=1}^n \left(\frac{B(\alpha_{i} + y_i, \beta_{i})}{B(\alpha_{i}, \beta_{i})}\right)^2.
    \label{eq:influence-variance}
\end{align}

\subsection{Gradient of Expected Influence}
Recall the identity 
\begin{equation}
    \frac{\partial}{\partial a} B(a,b) = B(a,b) (\psi(a) - \psi(a+b)),
\end{equation}
where $\psi$ is the digamma function.
We can then compute each component of the gradient of $\E[f(y)]$:
\begin{align}
    \frac{\partial}{\partial y_i} \left( \E[f(y)] \right)
    &= \prod_{i=1}^n \frac{1}{B(\alpha_i,\beta_i)} \cdot \prod_{j \not= i} B(\alpha_j+y_j,\beta_j) \cdot \frac{\partial}{\partial y_i} \left( B(\alpha_i+y_i,\beta_i) \right) \\
    &= \prod_{i=1}^n \frac{1}{B(\alpha_i,\beta_i)} \cdot \prod_{j \not= i} B(\alpha_j+y_j,\beta_j) \cdot B(\alpha_i + y_i, \beta_i) \cdot (\psi(\alpha_i+y_i) - \psi(\alpha_i+y_i+\beta_i)) \\
    &= \E[f(y)] \cdot (\psi(\alpha_i+y_i) - \psi(\alpha_i+y_i+\beta_i)).
\end{align}

\subsection{Counterexample for Fact~\ref{fact:variance-bad}}
We give a specific choice of parameters $n,\alpha_i,\beta_i$ and $y_i$ for
which the resulting function $\sqrt{\Var(f(y))}$ is non-convex, non-concave,
non-submodular and non-supermodular for various points $y \in \reals^n_+$. For
the case $T=1$, the function $1-f(y)$ is a valid influence function, so we
have a valid counterexample for $\sqrt{\Var(\Is(X,y))}$.

Consider the case $n=2$, with $\alpha_1=\alpha_2=1$ and $\beta_1=\beta_2=1$.
This corresponds to the Budget Allocation problem where we have two sources each with an edge to one customer, and we have only our prior (i.e. no data) on either of the edge probabilities.
Using equation~\eqref{eq:influence-variance}, we can directly compute the
Hessian of $\sqrt{\Var(f(y))}$ at any point $y$, e.g. using Mathematica. In
particular, for $y_1=y_2=1$, the Hessian has a positive and a negative
eigenvalue, so $\sqrt{\Var(f(y))}$ is neither convex nor concave at this
point. Also for $y_1=y_2=1$, the off-diagonal element is negative, so
$\sqrt{\Var(f(y))}$ is not supermodular over all of $\reals^2_+$. However, for
$y_1=y_2=3$, the off-diagonal element is positive, so our function is also not
submodular.

\end{document}